\tikzstyle{block} = [rectangle, draw, fill=blue!20, text width=8em, text centered, rounded corners, minimum height=3em]
\tikzstyle{cnn} = [rectangle, draw, fill=orange!30, text width=10em, text centered, rounded corners, minimum height=3em]
\tikzstyle{transformer} = [rectangle, draw, fill=green!20, text width=10em, text centered, rounded corners, minimum height=3em]
\tikzstyle{arrow} = [thick, ->, >=stealth]
\definecolor{darkcerulean}{rgb}{0.03, 0.27, 0.49} 
\definecolor{iris}{rgb}{0.35, 0.31, 0.81} 
\definecolor{linkcolor}{RGB}{82,82,192} 
\definecolor{good_color}{rgb}{0.13850039, 0.41331206, 0.74052025}
\definecolor{bad_color}{rgb}{0.66080672, 0.21526712, 0.23069468}
\definecolor{green_color}{RGB}{100, 166, 113}
\newcommand{\filtersize}{f}
\newcommand{\method}{{PSDNorm}\xspace}
\title{\method: Temporal Normalization for Deep Learning in Sleep Staging}
\theoremstyle{plain}
\newtheorem{theorem}{Theorem}[section]
\newtheorem{proposition}[theorem]{Proposition}
\theoremstyle{definition}
\theoremstyle{remark}
\author{%
  Théo Gnassounou \\
  Université Paris-Saclay,\\
  Inria, CEA,\\
  91120 Palaiseau, France \\
  \texttt{theo.gnassounou@inria.fr} \\
  \And
  Antoine Collas \\
  Université Paris-Saclay,\\
  Inria, CEA,\\
  91120 Palaiseau, France \\
  \texttt{antoine.collas@inria.fr} \\
  \AND
  Rémi Flamary \\
  École Polytechnique, \\
  IP Paris, CMAP, UMR 7641, \\
  91120 Palaiseau, France\\
  \texttt{remi.flamary@polytechnique.edu} \\
  \And
  Alexandre Gramfort\thanks{A. Gramfort joined Meta and can be reached at \texttt{agramfort@meta.com}} \\
  Université Paris-Saclay,\\
  Inria, CEA,\\
  91120 Palaiseau, France \\
  \texttt{alexandre.gramfort@inria.fr} \\
}
\begin{document}

\maketitle

\begin{abstract}
    Distribution shift poses a significant challenge in machine learning, particularly in
    biomedical applications using data collected across different subjects, institutions, and recording devices, such as sleep data.
    While existing normalization layers, BatchNorm, LayerNorm and InstanceNorm, help mitigate distribution shifts, when applied over the time dimension they ignore the dependencies and auto-correlation inherent to the vector coefficients they normalize.
    In this paper, we propose PSDNorm that leverages Monge mapping and temporal context to normalize feature maps in deep learning models for signals.
    Evaluations with architectures based on U-Net or transformer backbones trained on 10K subjects across 10 datasets,
    show that PSDNorm achieves state-of-the-art performance on unseen left-out datasets while being {more robust to data scarcity}.
\end{abstract}

\section{Introduction}
\label{sec:intro}

\paragraph{Data Shift in Physiological Signals}
{Machine learning techniques have achieved remarkable success in various domains, 
including computer vision, biology, audio processing, and language understanding. 
However, these methods face significant challenges when there are distribution shifts 
between training and evaluation datasets~\citep{MORENOTORRES2012521}. For example, in biological data, such as 
electroencephalography (EEG) signals, the distribution of the data can vary significantly.
Indeed, data is collected from different subjects, electrode positions, and recording conditions.
This paper focuses on sleep staging, a clinical task that consists in classifying periods of sleep in different stages based on EEG signals~\citep{STEVENS200445}. 
Depending on the dataset, the cohort can be composed of different age groups, sex repartition, health conditions, and recording conditions~\citep{MASS,SHHS,marcus2013CHAT}.
Such variability brings shift in the distribution making it challenging for the model to generalize to unseen datasets.
}
\paragraph{Normalization to Address Data Shift}

{ Normalization layers are widely used in deep learning to improve training
stability and generalization. Common layers include
BatchNorm~\citep{ioffe2015batch}, LayerNorm~\citep{ba2016layer}, and
InstanceNorm~\citep{ulyanov2016instance}, which respectively compute statistics
across the batch, normalize across all features within each sample, and
normalize each channel independently within a sample. Some normalization methods
target specific tasks, such as EEG covariance matrices~\citep{kobler-etal:22} or
time-series forecasting~\citep{kim2021reversible}, but they do not fully address
spectral distribution shifts reflected in the temporal auto-correlations of
signals.
{Other papers have proposed to adapt layer statistics to new domains~\citep{li2016revisitingbatchnormalizationpractical, chang2019domainspecificbatchnormalizationunsupervised}.}
In sleep staging, a simple normalization is often applied as
preprocessing, e.g., standardizing signals over entire
nights~\citep{apicella2023effects} or short temporal
windows~\citep{chambon2018deep}. Recent
studies~\citep{gnassounou2023convolutional, gnassounou2024multi} highlight the
importance of considering temporal correlation and spectral content in
normalization, proposing Temporal Monge Alignment (TMA), which aligns Power
Spectral Density (PSD) to a common reference using Monge mapping, going beyond
simple z-score normalization. However, these methods remain preprocessing steps
that cannot be inserted as layers in the network architecture as it is done with
BatchNorm, LayerNorm or InstanceNorm. }

\paragraph{Deep Learning for Sleep Staging}
{
Sleep staging has been addressed by various neural network architectures, which process raw signals~\citep{chambon2018deep, Perslev2021USleep, guillot2021robustsleepnet}, spectrograms~\citep{phan_l-seqsleepnet_2023, phan_seqsleepnet_2019}, or both~\citep{xsleepnet}. More recent approaches involve transformer-based models that handle multimodal~\citep{wang_caresleepnet_2024}, spectrogram~\citep{phan2022sleeptransformer}, or heterogeneous inputs~\citep{Guo2024transformer}, offering improved modeling of temporal dependencies. However, most existing models are trained on relatively small cohorts, typically consisting of only a few hundred subjects, which limits their ability to generalize to diverse clinical settings. Notable exceptions include U-Sleep~\citep{Perslev2021USleep}, which was trained on a large-scale dataset and incorporates BatchNorm layers to mitigate data variability, and foundational models~\citep{thapa_multimodal_2025, fox_foundational_2025, deng_unified_2025} that achieve strong generalization from vast amount of data but require significant computational resources and are challenging to adapt without fine-tuning. Our focus is on developing smaller, efficient models that balance good generalization with ease of training and deployment in clinical practice.
}

\begin{figure*}
    \centering
    \includegraphics[width=0.95\textwidth]{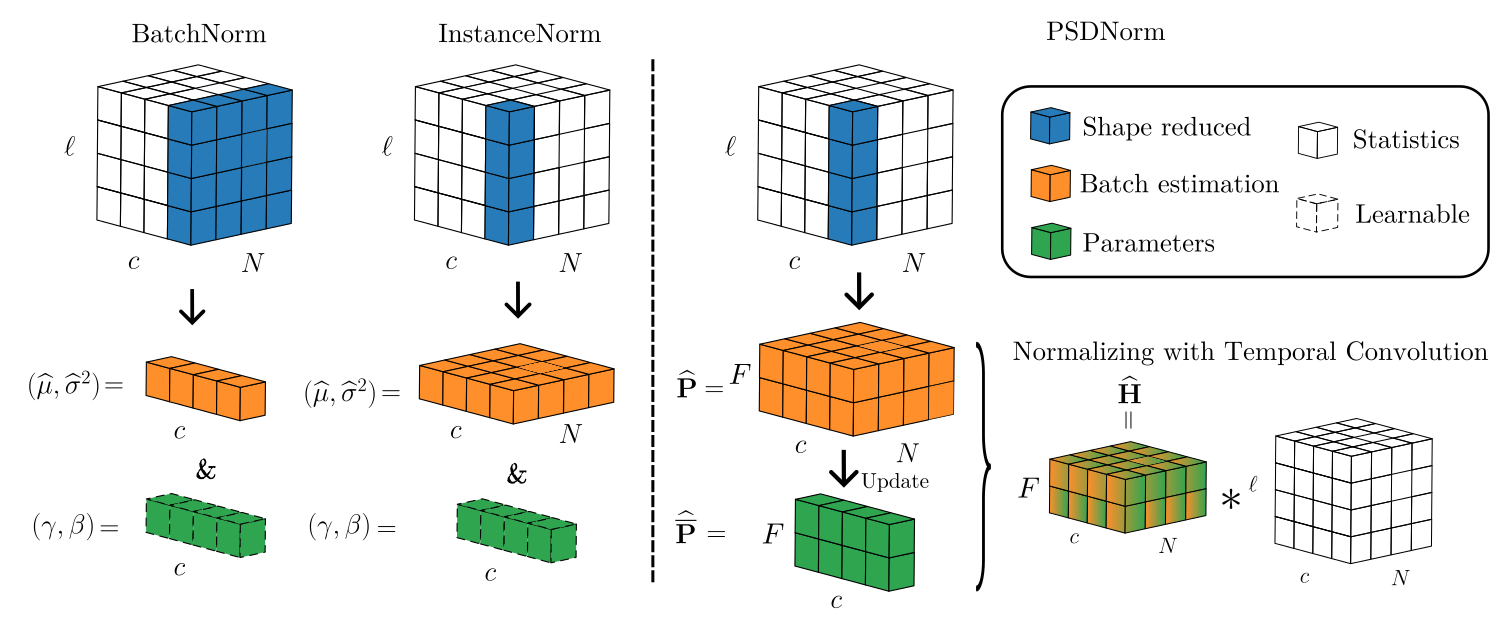}
    \caption{
        \textbf{Description of normalization layers.} 
        The input shape is $(N, c, \ell)$ with batch size $N$, channels $c$, and signal length $\ell$. 
        BatchNorm estimates the mean $\widehat\mu$ and variance $\widehat{\sigma}^2$ over batch and time, 
        and learns parameters $(\gamma, \beta)$ to normalize the input. 
        \method estimates PSDs $\widehat{\mathbf{P}}$ over time and accounts for local temporal correlations. 
        It computes the barycenter PSD $\widehat{\overline{\mathbf{P}}}$, updates it via a running 
        Riemannian barycenter~\eqref{eq:running_barycenter_method}, and applies the filter 
        $\widehat{\mathbf{H}}$ to normalize the input.
        The hyperparameter $\filtersize$ controls the extent of temporal correlation considered, thereby adjusting the strength of the normalization.
        \vspace{-0.3cm}
    }
    \label{fig:concept}
\end{figure*}

\paragraph{Contributions}
In this work, we introduce the \method deep learning layer, 
a novel approach to address distribution shifts in machine learning for signals.
\method leverages Monge Mapping to incorporate temporal context and normalize feature maps effectively.
{This layer enhances model robustness to new subjects at inference time.}
Unlike standard normalization layers such as LayerNorm or InstanceNorm, \method leverages the sequential nature of intermediate feature maps, as illustrated
in Figure \ref{fig:concept}.
We evaluate \method through extensive experiments on 10 
sleep datasets.
{This evaluation covers 10M of samples across 10K subjects, 
using a leave-one-dataset-out (LODO) protocol with 3 different random seeds. 
To the best of our knowledge, such a large-scale and systematic evaluation 
has never been conducted before.}
\method achieves state-of-the-art performance and requires 4 times fewer labeled data to match the accuracy of the best baseline.
Results highlight the potential of \method as a practical and efficient solution for tackling domain shifts in signals. \\
The paper is structured as follows: \Cref{sec:related} discusses existing normalization layers and pre-processing.
\Cref{sec:method} introduces \method, followed by numerical results in \Cref{sec:numerical_results}.

\paragraph*{Notations}
\noindent Vectors are denoted by small cap boldface letters (e.g., $\bx$), matrices by large cap boldface letters (e.g., $\bX$).
The element-wise product, power of $n$ and division are denoted $\odot$, $\cdot^{\odot n}$ and $\oslash$, respectively.
$\intset{K}$ denotes $\{1, \ldots, K\}$. 
The absolute value is $|.|$.
The discrete circular convolution along the temporal axis operates row-wise as, \( *: \mathbb{R}^{c \times \ell} \times \mathbb{R}^{c \times \filtersize} \to \mathbb{R}^{c \times \ell} \) for \( \ell \geq \filtersize \).
$\Vectr: \bbR^{c \times \ell} \to \bbR^{c \ell}$ concatenates rows of a time series into a vector.
$x_l = [\bx]_{l}$ refers to the $l^\text{th}$ element of $\bx$, and $X_{l,m} = [\bX]_{l,m}$ denotes the element of $\bX$ at the $l^\text{th}$ row and $m^\text{th}$ column. 
$\bX^*$ and $\bX^\top$ are the conjugate and the transpose of $\bX$, respectively.
$\diag$ puts the elements of a vector on the diagonal of a matrix.
$\otimes$ is the Kronecker product.
$\bone_c$ is the vector of ones of size $c$.

\section{Related Works}

In this section, we first review {classical architectures for sleep staging} and fundamental concepts of normalization layers.
Then, we recall the Temporal Monge Alignment (TMA) method~\citep{gnassounou2023convolutional} that aligns the PSD of signals using optimal transport.

\label{sec:related}

\paragraph{Deep Learning for Sleep Staging}
Numerous neural network architectures have been proposed for sleep staging, processing data in different formats as introduced in \cref{sec:intro}.
Different types of architectures have been explored, such as convolutional neural networks (CNNs)~\citep{chambon2018deep}, recurrent neural networks (RNNs)~\citep{Supratak_2017, phan_seqsleepnet_2019}, and more recently transformers~\citep{phan2022sleeptransformer, wang_caresleepnet_2024, Guo2024transformer}, which have shown promise in modeling temporal dependencies in sleep data.
While many models are typically evaluated on a limited number of datasets, the work by~\citep{Perslev2021USleep} introduced U-Sleep, a model trained on a large-scale dataset of sleep recordings.
Their architecture, based on U-Time~\citep{perslev2019u}, incorporates BatchNorm layers to mitigate data variability, and they employ a domain generalization approach: training a single model on a sufficiently diverse set of domains to ensure it generalizes to unseen datasets without additional adaptation.
This architecture is composed of encoder-decoder blocks with skip connections, allowing the model to capture both local and global features of the sleep signals effectively.
Each encoder and decoder block consists of convolutional layers followed by BatchNorm and non-linear activation functions, enabling the model to learn robust representations of the input data.
A more detailed description of U-Time is provided in \cref{app:U-Sleep}.
\color{black}
\paragraph{Normalization Layers}

{Normalization layers enhance training and robustness in deep neural networks. 
The most common are BatchNorm~\citep{ioffe2015batch}, InstanceNorm~\citep{ulyanov2016instance}, and LayerNorm~\citep{ba2016layer}.
BatchNorm normalizes feature maps using batch and time statistics, 
ensuring zero mean and unit variance. The output is adjusted with learnable parameters.
InstanceNorm normalizes each channel per sample using its own statistics, 
independent of the batch (see Fig.~\ref{fig:concept}). Popular in time-series forecasting, it is used in RevIN~\citep{kim2021reversible}, 
which reverses normalization after decoding.
LayerNorm normalizes across all channels and time steps within each sample, 
with learnable scaling and shifting.}
{While these normalization layers are widely employed, they operate on vectors ignoring statistical dependence and autocorrelation between their coefficients, which are prevalent when operating on time-series.}
To address this limitation, the Temporal Monge Alignment (TMA)~\citep{gnassounou2023convolutional, gnassounou2024multi} was introduced as
a pre-processing step to align temporal correlations by leveraging the Power Spectral density (PSD) of
multivariate signals using Monge Optimal Transport mapping.

\paragraph{Gaussian Periodic Signals}
Consider a multivariate signal $\bX \triangleq [\bx_1, \dots, \bx_c]^\top \in \bbR^{c \times \ell}$ of sufficient length.
A standard assumption is that this signal follows a centered Gaussian
distribution where sensors are uncorrelated and signals are periodic.
This periodicity and uncorrelation structure implies that the signal's covariance matrix is block diagonal, with each block having a circulant structure.
A fundamental property of symmetric positive definite circulant matrices is their diagonalization~\citep{gray06} with real and positive eigenvalues in the Fourier basis $\bF_{\ell} \in \bbC^{\ell \times \ell}$ of elements
\begin{equation}
    \label{eq:fourier_basis}
    \left[\bF_{\ell}\right]_{l,l^\prime} \triangleq \frac{1}{\sqrt{\ell}}\exp\left(-2i \pi \frac{(l-1) (l^\prime-1)}{\ell}\right),
\end{equation}
where $l,l^\prime \in \intset{\ell}$.
Hence, we have $\Vectr(\bX)\sim\mathcal{N}(\mathbf{0},\bSigma)$ with $\bSigma$ block-diagonal,
\begin{equation}
    \label{eq:covariance_structure}
    \bSigma = \left(\bI_{c} \otimes \bF_{\ell}\right) \diag\left(\mathrm{vec}(\bP)\right) \left(\bI_{c} \otimes \bF_{\ell}^{*}\right)
    \;\in\;\mathbb{R}^{c\ell\times c\ell},
\end{equation}
where $\bP \in \mathbb{R}^{c\times \ell}$ contains positive entries
corresponding to the Power Spectral Density of each sensor.
In practice, since we only have access to a single realization of the signal, the PSD is estimated with only $\filtersize \ll \ell$ frequencies, \ie $\bP \in \bbR^{c \times \filtersize}$.
This amounts to considering the local correlation of the signal and neglecting the long-range correlations.

\paragraph{Power Spectral Density Estimation}
The Welch estimator~\citep{welch1967use} computes the PSD of a signal by
averaging the squared Fourier transform of overlapping segments of the signal.
Hence, the realization of the signal $\bX \in \bbR^{c \times \ell}$ is decimated into overlapping segments $\{\bX^{(1)}, \dots, \bX^{(L)}\}\subset \bbR^{c \times \filtersize}$ to estimate the PSD.
The Welch estimator is defined as
\begin{equation}
    \widehat{\bP} \triangleq \frac{1}{L} \sum_{l=1}^{L} \left|\left(\left(\bone_c \bw^\top\right) \odot \bX^{(l)}\right) \bF_{\filtersize}^* \right|^{\odot 2} \in \bbR^{c \times \filtersize} \;,
    \label{eq:welch}
\end{equation}
where $\bw \in \bbR^\filtersize$ is the window function such that $\Vert \bw \Vert_2 = 1$.

\paragraph{$\filtersize$-Monge Mapping}
Let $\cN(\mathbf{0}, \bSigma^{(s)})$ and $\cN(\mathbf{0}, \bSigma^{(t)})$ 
be source and target centered Gaussian distributions respectively with covariance matrices following the structure~\eqref{eq:covariance_structure} and PSDs denoted by $\bP^{(s)}$ and $\bP^{(t)} \in \bbR^{c \times \filtersize}$.
Given a signal $\bX \in \bbR^{c\times \ell}$ such that $\Vectr(\bX) \sim
\cN(\mathbf{0}, \bSigma^{(s)})$, the $\filtersize$-Monge mapping as defined by \citep{gnassounou2023convolutional, gnassounou2024multi} is
\begin{equation}
    \label{eq:monge_mapping}
    m_{\filtersize}\left(\bX, \bP^{(t)}\right) \triangleq \bX * \bH \in \bbR^{c \times \ell},\quad \text{where}\quad   \bH \triangleq \frac{1}{\sqrt{\filtersize}} 
    \left( {\bP^{(t)}} \oslash {\bP^{(s)}} \right)^{\odot\frac12} \bF_{\filtersize}^*
    \in \bbR^{c\times \filtersize} \;.
\end{equation}
In this case, $\filtersize$ controls the alignment between the source and target distributions.
Indeed, if $\filtersize = \ell$, then the $\filtersize$-Monge mapping is the classical Monge mapping between Gaussian distributions and the source signal has its covariance matrix equal to $\bSigma^{(t)}$ after the mapping.
If $\filtersize = 1$, then each sensor is only multiplied by a scalar.

\paragraph{Gaussian Wasserstein Barycenter}
For Gaussian distributions admitting the decomposition~\eqref{eq:covariance_structure}, the Wasserstein barycenter~\citep{agueh2011barycenters} admits an elegant closed-form solution.
Consider $K$ centered Gaussian distributions admitting the decomposition~\eqref{eq:covariance_structure} of PSDs $\bP^{(1)}, \dots, \bP^{(K)}$.
Their barycenter is also a centered Gaussian distribution $\cN(\mathbf{0}, \overline{\bSigma})$ admitting the decomposition~\eqref{eq:covariance_structure} with PSD
\begin{equation}
        \overline{\bP} 
        \triangleq \left( \frac{1}{K} \sum^{K}_{k=1} {\bP^{(k)}}^{\odot\frac{1}{2}}\right)^{\odot 2} \in \bbR^{c \times \filtersize} \;.
        \label{eq:barycenter}
\end{equation}

\paragraph{Temporal Monge Alignement}
TMA is a pre-processing method that aligns the PSD of multivariate signals using the $\filtersize$-Monge mapping.
Given a source signal $\bX_s$ and a set of target signals $\bX_t = \{\bX_{t}^{(1)}, \dots, \bX_{t}^{(K)}\}$, the TMA method uses the $\filtersize$-Monge mapping between the source and the Wasserstein barycenter of the target signals.
Hence, it simply consists of 1) estimating the PSD of all the signals, 2)
computing the Wasserstein barycenter of the target signals, and 3) applying the $\filtersize$-Monge mapping to the source signal. 
TMA, as a preprocessing method, is inherently limited to handling PSD shifts in
the raw signals and cannot address more complex distributional changes in the data.
This limitation highlights the need for a layer that can effectively capture and adapt to these complex variations during learning and inside deep learning models.

\section{\method Layer}
\label{sec:method}

The classical normalization layers, such as BatchNorm or InstanceNorm do not
take into account the temporal autocorrelation structure of signals. They treat
each time sample in the intermediate representations independently.
In this section, we introduce the \method layer that aligns the PSD of each signal onto a barycenter PSD within the architecture of a deep learning model.


{\method is a novel normalization layer that can be used as a drop-in replacement for layers like BatchNorm or InstanceNorm.
Instead of simple standardization, it aligns the Power Spectral Density (PSD) of feature maps to a running barycenter PSD.
This approach, optimized for modern hardware, enhances model robustness to new subjects at inference time without retraining.}
We define the normalized feature map as $\widetilde{\bG} \triangleq \text{\method}(\bG)$. The following sections introduce the core components of \method and its implementation.

\subsection{Core Components of the layer}

In the following, we formally define \method and present each of its three main
components: 1) PSD estimation,  2) running Riemannian barycenter estimation, and 3) $\filtersize$-Monge mapping computation.
Given a batch $\cB = \{\bG^{(1)}, \dots, \bG^{(N)}\}$ of $N$ pre-normalization
feature maps, \method outputs a normalized batch $\widetilde{\cB} =
\{\widetilde{\bG}^{(1)}, \dots, \widetilde{\bG}^{(N)}\}$ with normalized PSD.
Those three steps are detailed in the following and
illustrated in the right part of Figure~\ref{fig:concept}.
\paragraph{PSD Estimation}
\begin{wrapfigure}{r}{0.45\textwidth}
    \centering
    \vspace{-1.cm}
    \includegraphics[width=0.45\textwidth]{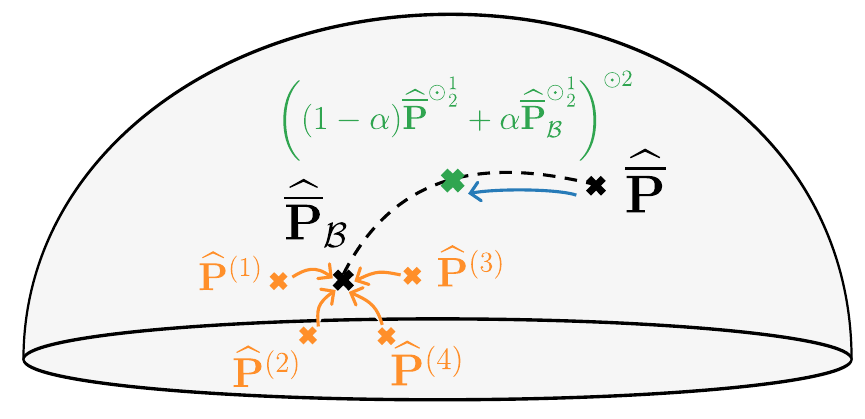}
    \caption{\textbf{Description of the running Riemanian barycenter.}
    The barycenter of the batch $\widehat{\overline{\bP}}_\cB$ is estimated from the PSD of each batch sample.
    }
    \vspace{-1cm}
    \label{fig:geodesic}
\end{wrapfigure}

First, the estimation of the PSD of each feature map is performed using the Welch method.
The per-channel mean $\widehat{\bmu}^{(j)}$ is computed for each feature map $\bG^{(j)}$ as
    $\widehat{\bmu}^{(j)} \triangleq \frac{1}{\ell}\sum_{l=1}^{\ell} \left[\bG^{(j)}\right]_{:,l} \in \bbR^{c} \;.$

Then, the PSD of the centered feature map $\bG^{(j)} -
\widehat{\bmu}^{(j)}\bone_\ell^\top$, denoted $\widehat{\bP}^{(j)}$, is estimated as described in Equation~\eqref{eq:welch}.
This centering step is required as feature maps are typically non-centered due to activation functions and convolution biases but they are assumed to have a stationary mean.
The Welch estimation involves segmenting the centered feature map into overlapping windows, computing the Fourier transform of each window and then averaging them.

\paragraph{Geodesic and Running Riemanian Barycenter}

The \method aligns the PSD of each feature map to a barycenter PSD.
This barycenter is computed during training by interpolating between the batch Wasserstein barycenter and the current running Riemanian barycenter using the geodesic associated with the Bures metric~\citep{bhatia2019bures}.
The batch barycenter is first computed from the current batch PSDs $\big\{\widehat{\bP}^{(1)}, \dots, \widehat{\bP}^{(N)}\big\}$ using \cref{eq:barycenter}.
To ensure gradual adaptation, the running barycenter is updated via an exponential geodesic average with $\alpha \in [0,1]$:
\begin{equation}
    \label{eq:running_barycenter_method}
    \widehat{\overline{\bP}} \leftarrow 
    \left((1-\alpha)\widehat{\overline{\bP}}^{\odot\tfrac12}
    + \alpha {\widehat{\overline{\bP}}_\cB}^{\odot\tfrac12}\right)^{\odot 2} \in \bbR^{c \times \filtersize} \;.
\end{equation}
A proof of the geodesic is provided in Appendix~\ref{app:geodesic}.

\paragraph{PSD Adaptation with$\filtersize$-Monge Mapping}

The final step of the \method is the application of the $\filtersize$-Monge mapping to each feature map after subtracting the per-channel mean.
Indeed, for all $j \in \intset{N}$, it is defined as
\begin{equation}
    \label{eq:method}
    \begin{aligned}
        \widetilde{\bG}^{(j)} &= m_\filtersize\left(\bG^{(j)} - \widehat{\bmu}^{(j)}\bone_\ell^\top, {\widehat{\overline{\bP}}} \right) 
        = \left(\left(\bG^{(j)} - \widehat{\bmu}^{(j)}\bone_\ell^\top \right) * \widehat{\bH}^{(j)}\right) \in \bbR^{c \times \ell}
    \end{aligned}
\end{equation}
where $\widehat{\bH}^{(j)}$ is the Monge mapping filter computed as
\begin{equation}
    \label{eq:monge_filter_method}
    \widehat{\bH}^{(j)} \triangleq \frac{1}{\sqrt{\filtersize}} \left( {\widehat{\overline{\bP}}} \oslash \widehat{\bP}^{(j)} \right)^{\odot{\frac{1}{2}}} \bF_{\filtersize}^* \in \bbR^{c\times f}
\end{equation}
where $\widehat{\bP}^{(j)}$ is the estimated PSD of $\bG^{(j)} - \widehat{\bmu}^{(j)}\bone_\ell^\top$.

\subsection{Implementation details}

\paragraph{Overall Algorithm}

The forward computation of the proposed layer is outlined in~\cref{alg:method}.
At train time, the \method performs three main operations: 1) PSD estimation, 2) running Riemannian barycenter update, and 3) Monge mapping application.
At inference, the \method operates similarly, except it does not update the running barycenter.
The \method is fully differentiable and can be integrated into any deep learning model.
Similarly to classical normalization layers, a stop gradient operation is
applied to the running barycenter to prevent the backpropagation of the gradient
computation through the barycenter. 
\method has a unique additional hyperparameter $\filtersize$ which is the filter size.
{It controls the alignment between each feature map and the running barycenter PSD and it is typically chosen in our experiments between 1 and 17.}
In practice, the Fourier transforms are efficiently computed using the Fast Fourier Transform (FFT) algorithm.
Because of the estimation of PSDs, the complexity of the \method, both at train
and inference times, is $\cO(N c \ell \filtersize \log(\filtersize))$, where $N$ is
the batch size, $c$ the number of channels, $\ell$ the signal length, and
$\filtersize$ the filter size.

\subsection{Discussion and Connections to Related Methods}

\begin{wrapfigure}{r}{0.6\textwidth}
    \vspace{-1.cm}
     \begin{minipage}{0.58\textwidth}
    \begin{algorithm}[H]
        \caption{Forward pass of \method}
        \label{alg:method}
        \begin{algorithmic}[1]
            \STATE {\textbf{Input:} Batch $\mathcal{B} = \left\{ \mathbf{G}^{(1)}, \dots, \mathbf{G}^{(N)} \right\}$, 
            running barycenter $\widehat{\overline{\mathbf{P}}}$, filter-size $\filtersize$, 
            momentum $\alpha$, training flag}
            
            \STATE {\textbf{Output:} Normalized batch $\left\{\widetilde{\mathbf{G}}^{(1)}, \dots, \widetilde{\mathbf{G}}^{(N)}\right\}$}

            \FOR{$j = 1$ to $N$}
                \STATE $\widehat{\bmu}^{(j)} \leftarrow$ Mean estimation
                \STATE $\widehat{\mathbf{P}}^{(j)} \leftarrow$ PSD est. from $\widetilde{\mathbf{G}}^{(j)}-\widehat{\bmu}^{(j)}\bone_\ell^\top$ with eq.~\eqref{eq:welch}
            \ENDFOR

            \IF{training}
                \STATE $\widehat{\overline{\mathbf{P}}}_\mathcal{B} \leftarrow$
                Batch bary. from $\{\widehat{\mathbf{P}}^{(j)}\}_j$ with eq.~\eqref{eq:barycenter}
                \STATE $\widehat{\overline{\mathbf{P}}} \leftarrow$ Running bary.
                up. from $\widehat{\overline{\mathbf{P}}},\widehat{\overline{\mathbf{P}}}_\mathcal{B}$ with eq.~\eqref{eq:running_barycenter_method}
            \ENDIF

            \FOR{$j = 1$ to $N$}
                \STATE $\widehat{\mathbf{H}}^{(j)} \leftarrow$ Filter estimation from $\widehat{\mathbf{P}}^{(j)}, \widehat{\overline{\mathbf{P}}}$ with eq.~\eqref{eq:monge_filter_method}
                \STATE $\widetilde{\mathbf{G}}^{(j)} \leftarrow$ $\filtersize$-Monge mapping with eq.~\eqref{eq:method}
            \ENDFOR
        \end{algorithmic}
    \end{algorithm}
\end{minipage}

\end{wrapfigure}

{\paragraph{\method as a generalization of InstanceNorm}
InstanceNorm applies a per-channel $z$-score over time, subtracting the mean and dividing by the standard deviation—equivalent to whitening under an i.i.d.\ assumption over time.
In contrast, PSDNorm explicitly accounts for temporal structure by estimating the PSD and whitening/re-coloring in the frequency domain.
InstanceNorm is recovered as a special case of PSDNorm by setting the filter size to $\filtersize = 1$ and using the uniform PSD barycenter as $\widehat{\overline{\bP}} = {\mathbf{1}}$.
as the re-coloring transform instead of the barycentric PSD.

\vspace{0.9cm}

\paragraph{Similarity with Test-time Domain Adaptation}

\method is inspired by Temporal Monge Alignment (TMA)~\citep{gnassounou2023convolutional}, a pre-processing technique that can be used for test-time adaptation. 
Test-time Domain Adaptation methods adjust a pre-trained model to a new target domain during inference, without requiring access to the original training data~\citep{wang_tent_2021, yang_generalized_2021}. 
While \method must be integrated into the model during training and is not a post-hoc adaptation method, it provides a similar benefit at inference time. 
Designing new modern architectures that incorporate \method can enhance robustness to domain shifts without the need for retraining or access to source data.

\paragraph{Discussion of Gaussian and Stationarity Assumptions}
\method relies on the Gaussian approximation of OT for compensation variability
but does not assume that the signals are Gaussian. This allows for efficient alignment of second-order statistics (covariance
structure), but also allows preserving higher-order discriminative information.
This approach is
computationally tractable and targets the most prominent sources of domain shift
without over-distorting the signal, a strategy also used in successful methods
like Deep CORAL~\citep{sun_deep_2016}. Like BatchNorm, \method assumes shifts
are captured by low-order statistics, but it provides a richer alignment by
incorporating temporal context.
}
\section{Numerical Experiments}
\label{sec:numerical_results}

In this section, we evaluate the proposed method through a series of experiments designed to highlight its effectiveness and robustness on the clinically relevant task of sleep staging.
We first describe the datasets and training setup employed, followed by a performance comparison with existing normalization techniques.
{Next, we assess the efficiency of \method by training over varying numbers of subjects per dataset.
Finally, we analyze the robustness of \method against domain shift by focusing on subject-wise performance and different architectures.}
The code is available at \url{https://github.com/tgnassou/PSDNorm}. The anonymized code is available in the supplementary material.
{All numerical experiments were conducted using a total of 1500 GPU hours on NVIDIA H100 GPUs.}

\subsection{Experimental Setup}

\paragraph{Datasets}
\begin{wraptable}{r}{0.5\textwidth}\vspace{-1.3cm}
     \centering
    \caption{\textbf{Characteristics of the datasets.}}
    \label{tab:datasets}
    \resizebox{\linewidth}{!}{
        \begin{tabular}{lcccc}
            \toprule
            Dataset & Subj. & Rec. & Age $\pm$ std & Sex (F/M) \\
            \midrule
            ABC & 44 & 117 & 48.8 $\pm$ 9.8 & 43\%/57\% \\
            CCSHS & 515 & 515 & 17.7 $\pm$ 0.4 & 50\%/50\% \\
            CFS & 681 & 681 & 41.7 $\pm$ 20.0 & 55\%/45\% \\
            HPAP & 166 & 166 & 46.5 $\pm$ 11.9 & 43\%/57\% \\
            MROS & 2101 & 2698 & 76.4 $\pm$ 5.5 & 0\%/100\% \\
            PHYS & 70 & 132 & 58.8 $\pm$ 22.0 & 33\%/67\% \\
            SHHS & 5730 & 8271 & 63.1 $\pm$ 11.2 & 52\%/48\% \\
            MASS & 61 & 61 & 42.5 $\pm$ 18.9 & 55\%/45\% \\
            CHAT & 1230 & 1635 & 6.6 $\pm$ 1.4 & 52\%/48\% \\
            SOF & 434 & 434 & 82.8 $\pm$ 3.1 & 100\%/0\% \\
            \midrule
            Total & 11032 & 14710 &  -- & -- \\
            \bottomrule
        \end{tabular}
        }
    \vspace{-0.4cm}
\end{wraptable}
To evaluate the effect of normalization layers, we use ten datasets of sleep staging described in~\cref{tab:datasets}.
ABC~\citep{jp2018ABC}, CCSHS~\citep{rosen_prevalence_2003ccshs}, 
CFS~\citep{redline_familial_1995cfs}, HPAP~\citep{rosen2012homepap}, MROS~\citep{blackwell_associations_2011Mros}, SHHS~\citep{SHHS}, CHAT~\citep{marcus2013CHAT}, and SOF~\citep{spira_sleep-disordered_2008SOF}
are publicly available sleep datasets with restricted access
from National Sleep Research Resource (NSRR)~\citep{zhang_national_2018}.
PHYS~\citep{Phys} and MASS~\citep{MASS} are two other datasets publicly available.
Every 30\,s epoch is labeled with one of the five sleep stages: Wake, N1, N2, N3, and REM.
These datasets are unbalanced in terms of age, sex, number of subjects, and have been recorded with different sensors in different institutions which makes the sleep staging task challenging.
We now describe the pre-processing steps and splits of the datasets.

\paragraph{Data Pre-processing}
We follow a standard pre-processing pipeline used in the field~\citep{chambon2017deep, Stephansen_2018}.
The datasets vary in the number and type of available EEG and electrooculogram (EOG) channels.
To ensure consistency, we use two bipolar EEG channels, as some datasets lack additional channels.
For dataset from NSRR, we select the channels C3-A2 and C4-A1.
For signals from Physionet and MASS, we use the only available channels Fpz-Cz and Pz-Oz.
The EEG signals are low-pass filtered with a 30\,Hz cutoff frequency and resampled to 100\,Hz.
All data extraction and pre-processing steps are implemented using MNE-BIDS~\citep{Appelhoff2019} and MNE-Python~\citep{GramfortEtAl2013a}.


{
\paragraph{Leave-One-Dataset-Out (LODO) Setup and Balancing}
We evaluate model performance using a leave-one-dataset-out (LODO) protocol: in each fold, one dataset is held out for testing, and the model is trained on the union of the remaining datasets.
From the training data, 80\% of subjects are used for training and 20\% for validation, which is used for early stopping.
The full held-out dataset is used for testing. 
To assess performance in low-data regimes, we also evaluate a variant in which we subsample at most $N$ subjects per dataset, promoting balanced contributions across training sources.
We refer to this configuration as \textbf{balanced@$N$}, with $N$ ranging from 40 to 400.
The exact number of subjects per dataset in each case is listed in Appendix Table~\ref{tab:balanced_datasets}.
}

\begin{table}[t]
    \centering
    \caption{
        {
            \textbf{Balanced Accuracy (BACC) scores on the left-out datasets with USleep.}
            The top section reports results in the \textbf{large-scale} setting (using all available subjects), while the bottom section presents results in the \textbf{medium-scale} setting (balanced@400).
            For each row, the best score is highlighted in \textbf{bold}, and standard deviations reflect training variability across 3 random seeds.
            The mean BACC reports the average over all the subjects.
        }
    }
    \label{tab:results}
    \resizebox{0.95\textwidth}{!}{
    \begin{tabular}{llccccc}
    \toprule
    & Dataset & BatchNorm & LayerNorm & InstanceNorm & TMA & PSDNorm \\
    \midrule
        \multirow[t]{11}{*}{\rotatebox[origin=r]{90}{All subjects\;\;\;\;\;\;\;\;\;\;}} & ABC & $78.49_{\pm 0.42}$ & $77.94_{\pm 0.31}$ & $\mathbf{78.83_{\pm 0.59}}$ & $78.33_{\pm 0.12}$ & $78.56_{\pm 0.67}$ \\
         & CCSHS & {$\mathbf{88.79_{\pm 0.21}}$} & $87.51_{\pm 0.77}$ & $88.75_{\pm 0.04}$ & $88.61_{\pm 0.10}$ & $88.56_{\pm 0.36}$ \\
         & CFS & $84.97_{\pm 0.37}$ & $84.29_{\pm 0.67}$ & $\mathbf{85.73_{\pm 0.29}}$ & $84.85_{\pm 0.13}$ & $85.42_{\pm 0.09}$ \\
         & CHAT & $64.72_{\pm 3.94}$ & $64.36_{\pm 0.40}$ & $68.86_{\pm 2.49}$ & $69.76_{\pm 1.62}$ & $\mathbf{70.57_{\pm 1.24}}$ \\
         & HOMEPAP & $76.39_{\pm 0.29}$ & $75.23_{\pm 0.78}$ & $76.70_{\pm 0.35}$ & $\mathbf{76.77_{\pm 0.66}}$ & $76.72_{\pm 0.27}$ \\
         & MASS & $73.71_{\pm 0.62}$ & $71.39_{\pm 3.00}$ & $72.12_{\pm 0.70}$ & $\mathbf{73.90_{\pm 0.69}}$ & $72.51_{\pm 1.68}$ \\
         & MROS & $81.30_{\pm 0.25}$ & $80.44_{\pm 0.29}$ & $81.49_{\pm 0.18}$ & $80.91_{\pm 0.42}$ & $\mathbf{81.57_{\pm 0.34}}$ \\
         & PhysioNet & $76.13_{\pm 0.57}$ & $75.12_{\pm 0.22}$ & $76.15_{\pm 0.52}$ & $\mathbf{76.48_{\pm 0.37}}$ & $75.96_{\pm 1.02}$ \\
         & SHHS & $77.97_{\pm 1.46}$ & $75.98_{\pm 0.48}$ & $79.05_{\pm 0.89}$ & $78.21_{\pm 0.39}$ & $\mathbf{79.14_{\pm 1.01}}$ \\
         & SOF & $81.33_{\pm 0.54}$ & $81.82_{\pm 0.79}$ & $81.98_{\pm 0.22}$ & $81.84_{\pm 0.49}$ & $\mathbf{82.50_{\pm 0.34}}$ \\
         \midrule
          & Mean(Dataset) & $78.38_{\pm 0.47}$ & $77.41_{\pm 0.28}$ & $78.97_{\pm 0.11}$ & $78.98_{\pm 0.14}$ & {$\mathbf{79.15_{\pm 0.14}}$} \\
          & Mean(Subject) & $78.14_{\pm 1.01}$ & $76.78_{\pm 0.18}$ & $79.26_{\pm 0.48}$ & $78.77_{\pm 0.07}$ & $\mathbf{79.51_{\pm 0.62}}$ \\
         \midrule
        \multirow[t]{11}{*}{\rotatebox[origin=r]{90}{Balanced@400 \;\;\;\;\;\;\;\;\;}} & ABC & $78.26_{\pm 1.33}$ & $75.29_{\pm 0.81}$ & $\mathbf{78.73_{\pm 0.42}}$ & $78.04_{\pm 0.51}$ & $78.18_{\pm 0.68}$ \\
         & CCSHS & $87.42_{\pm 0.16}$ & $85.20_{\pm 0.48}$ & $\mathbf{87.62_{\pm 0.42}}$ & $87.57_{\pm 0.20}$ & $87.58_{\pm 0.30}$ \\
         & CFS & $84.32_{\pm 0.57}$ & $81.66_{\pm 1.36}$ & $\mathbf{84.72_{\pm 0.33}}$ & $84.58_{\pm 0.20}$ & $84.29_{\pm 0.36}$ \\
         & CHAT & $66.55_{\pm 0.88}$ & $61.19_{\pm 1.16}$ & $64.43_{\pm 4.41}$ & $68.73_{\pm 2.48}$ & $\mathbf{70.28_{\pm 1.70}}$ \\
         & HOMEPAP & $75.25_{\pm 0.50}$ & $74.86_{\pm 0.25}$ & $76.47_{\pm 0.63}$ & $76.10_{\pm 0.32}$ & $\mathbf{76.83_{\pm 0.61}}$ \\
         & MASS & $70.00_{\pm 1.91}$ & $68.56_{\pm 3.33}$ & $71.52_{\pm 1.13}$ & $71.63_{\pm 1.92}$ & $\mathbf{72.77_{\pm 1.09}}$ \\
         & MROS & $\mathbf{80.37_{\pm 0.20}}$ & $78.05_{\pm 0.22}$ & $80.28_{\pm 0.21}$ & $80.09_{\pm 0.40}$ & $80.26_{\pm 0.11}$ \\
         & PhysioNet & $\mathbf{75.81_{\pm 0.13}}$ & $71.82_{\pm 2.12}$ & $74.68_{\pm 0.55}$ & $75.31_{\pm 1.54}$ & $74.82_{\pm 2.11}$ \\
         & SHHS & $76.44_{\pm 0.92}$ & $75.12_{\pm 0.39}$ & $78.68_{\pm 0.37}$ & $77.00_{\pm 0.39}$ & $\mathbf{78.88_{\pm 0.68}}$ \\
         & SOF & $81.08_{\pm 1.14}$ & $78.70_{\pm 0.50}$ & $80.68_{\pm 1.38}$ & $\mathbf{81.25_{\pm 0.71}}$ & $79.49_{\pm 0.41}$ \\
         \midrule
          & Mean(Dataset)  &  $77.55_{\pm 0.34}$ & $75.05_{\pm 0.28}$ & $77.78_{\pm 0.46}$ & $78.03_{\pm 0.35}$ & {$\mathbf{78.34_{\pm 0.42}}$} \\
         & Mean(Subject) & $77.22_{\pm 0.34}$ & $75.04_{\pm 0.42}$ & $78.17_{\pm 0.28}$ & $77.74_{\pm 0.36}$ & $\mathbf{78.85_{\pm 0.59}}$ \\
        \bottomrule
        \end{tabular}
        }
\end{table}


{
    \paragraph{Architecture and Training}
    Sleep staging has inspired a variety of neural architectures, from early CNN-based models~\citep{chambon2017deep, Stephansen_2018, xsleepnet} to recent attention-based approaches~\citep{phan2022sleeptransformer, phan_l-seqsleepnet_2023, wang_caresleepnet_2024}.
    We evaluate two architectures: \textbf{U-Sleep}~\citep{perslev2019u, Perslev2021USleep}, a state-of-the-art temporal CNN model designed for robustness and large-scale training, and a newly introduced architecture, \textbf{CNNTransformer}.
    CNNTransformer combines a lightweight convolutional encoder with a Transformer applied to epoch-level embeddings.
    It is specifically tailored for two-channel EEG and designed to scale efficiently to large datasets, while remaining minimal in implementation (under 100 lines of code) and training cost (\Cref{sec:cnn_transformer}).
    Its design draws inspiration from recent transformer-based models for time series~\citep{yang2023bios}, with an emphasis on simplicity and practicality.
}



{
We use the Adam optimizer~\citep{kingma2014adam} with a learning rate of $10^{-3}$ to minimize the weighted cross-entropy loss, where class weights are computed from the training set distribution. 
Training is performed with a batch size of 64, and early stopping is applied based on validation loss with a patience of 3 epochs.
Each input corresponds to a sequence of 17'30s, with a stride of 10'30s between sequences along the full-night recording.
}
{The filter size $\filtersize$ of \method is set to 5. A sensitivity analysis of $\filtersize$ is provided in \Cref{app:filter_size} in the appendix, and show that 
the performance is stable across a range of values from 5 to 11.}

\paragraph{Evaluation}
At inference, the model similarly processes sequences of 17'30s with a stride of 10'30s.
Performance is evaluated using the balanced accuracy score (BACC), computed on the central 10'30s of each prediction window. 
Each experiment is repeated three times with different random seeds, and we report the mean and standard deviation of BACC.


\paragraph{Normalization Strategies}
We compare the proposed \method with three normalization strategies: BatchNorm, LayerNorm, and InstanceNorm.
Note that InstanceNorm corresponds to a special case of \method with $\filtersize=1$ and a fixed identity mapping instead of a learned running barycenter.
In the following experiments, the BatchNorm layers in the first three convolutional layers are replaced with either PyTorch's default implementations of LayerNorm, InstanceNorm~\citep{pytorch}, or \method.
To preserve the receptive field, the filter size $\filtersize$ of \method is used in the first layer and progressively halved in the following ones.
We fix the momentum $\alpha$ to $10^{-2}$.

\subsection{Numerical Results}
{
    This section presents results from large-scale sleep stage classification experiments. 
    The analysis begins with a comparison of \method against standard normalization layers—BatchNorm, LayerNorm, and InstanceNorm—on the full datasets. 
    Then, the data efficiency of each method is evaluated under limited training data regimes. 
    Finally, robustness to distribution shift is assessed via subject-wise performance across multiple neural network architectures.
}


\paragraph{Performance Comparison on Full Datasets}

{
    {\Cref{tab:results} (top) reports the LODO BACC of U-Sleep across all datasets, averaged over three random seeds. 
    \method consistently outperforms all baseline normalization layers—BatchNorm, LayerNorm, InstanceNorm, and TMA—achieving the highest mean BACC of 79.51\% over subjects, which exceeds BatchNorm (78.38\%), InstanceNorm (78.97\%), LayerNorm (77.41\%) and TMA (78.77\%).
    On the challenging CHAT dataset, where all methods struggle, \method outperforms all other normalizations by more than 1 percentage points, highlighting its robustness under strong distribution shifts.
    Although InstanceNorm is a strong baseline—outperforming BatchNorm and LayerNorm by at least one standard deviation on average—it is consistently surpassed by \method in average performance.
    In contrast, LayerNorm underperforms across the board, achieving the lowest average BACC and never ranking first, confirming its limited suitability for this task.
    \method also improves score by almost 1\% over TMA, showing that using Monge Alignment inside the network allows for better adaptation.}
}


\paragraph{Efficiency: Performance with 4$\times$ Less Data}
\begin{wrapfigure}{r}{0.52\textwidth}
    \vspace{-0.5cm}
    \centering
    \includegraphics[width=\linewidth]{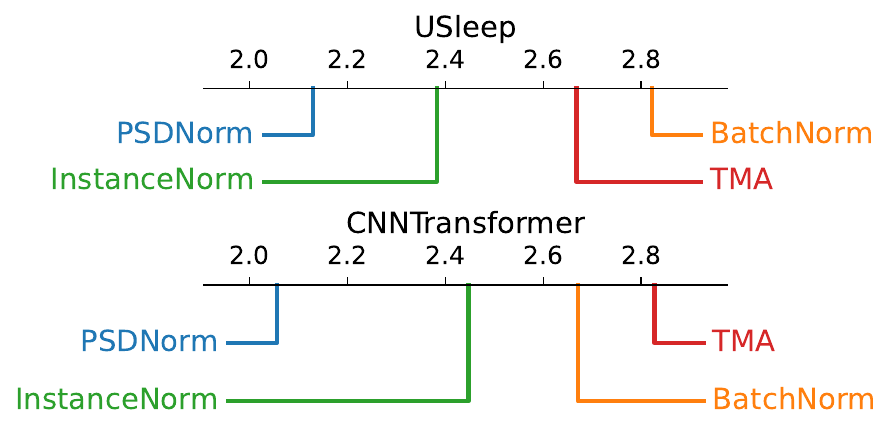}
    \caption{
        \textbf{Critical Difference (CD) diagram for two architectures on datasets balanced @400.}
        Average ranks across datasets and subjects for USleep and CNNTransformer.
        Black lines connect methods that are not significantly different.
    }
    \label{fig:cd_diagram}
\end{wrapfigure}
    The \method{} {layer improves model performance when trained on the full dataset ($\sim$10000 subjects), but such large-scale data availability is not always the case. 
    In many real-world scenarios—such as rare disease studies, pediatric populations, or data collected in constrained clinical settings—labeled recordings are scarce, expensive to annotate, or restricted due to privacy concerns. 
    Evaluating model robustness under these constraints is therefore essential.
    To this end, we train all models using the balanced@400 setup, which reduces the training data by a factor of 4 compared to the full-data setting.
    In this lower-data regime, \method continues to outperform all baseline normalization strategies and achieves higher average BACC. 
    {The performance improvement of \method over the best baseline is more pronounced in this setting: the BACC gain reaches $+0.67\%$, compared to $+0.25\%$ in the full-data setting.
    The gains exceed one standard deviation.}
    To assess statistical significance, we conducted a critical difference (CD) test~\citep{demvsar2006statistical}. 
    \Cref{fig:cd_diagram} (top) reports the average rank of each method and the corresponding statistical comparisons.
    The results confirm that \method significantly outperforms the baselines, underscoring the value of incorporating temporal structure into normalization for robust and data-efficient generalization.
    The same trend is observed for U-Sleep trained on all subjects (see in
    Appendix \Cref{fig:critical_diagram_all_subjects}). 
    The following experiments focus on the balanced@400 setup.
}

\paragraph{Robustness Across Architectures}

{
\method is a plug-and-play normalization layer that can be seamlessly integrated into various neural network architectures. 
To demonstrate this flexibility, we evaluate its performance on both the U-Sleep and CNNTransformer models.
\Cref{fig:cd_diagram} reports the average rank of each normalization method across datasets and subjects for both architectures using datasets balanced@400.
In both architectures, \method achieves the best overall ranking and demonstrates statistically significant improvements over both BatchNorm, InstanceNorm, and TMA.
{The results confirm that \method generalizes well beyond a single architecture and can provide consistent improvements in diverse modeling setups which is not the case of TMA
that is ranked the worst with CNNTransformer.}
InstanceNorm performs competitively in some cases but is never significantly better than \method.
Detailed numerical scores for CNNTransformer are reported in the supplementary material (\cref{tab:cnntransformer}).

{It is important to highlight that \method brings improvements without too much additional computational cost.
In appendix \Cref{app:psdnorm_time} we provide a detailed comparison of the computational time of \method with other normalization layers. The results show that \method is only slightly slower than BatchNorm and InstanceNorm, with a negligible increase in training time (less than 10\%) and no significant impact on inference speed.}
}

\newpage
\paragraph{Performance on the most challenging subjects}
\begin{wrapfigure}{r}{0.48\textwidth}
    \centering
    \includegraphics[width=.8\linewidth]{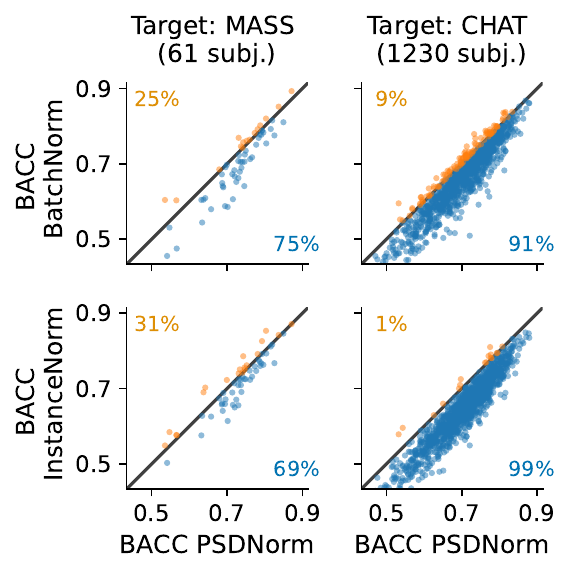}\vspace{-2mm}
    \caption{
        \textbf{Subject-wise BACC comparison on MASS and CHAT (balanced @400).}
        Blue dot means improvement with \method.
    }
    \label{fig:scatter}
    \vspace{-0.3cm}
\end{wrapfigure}
Performance variability across subjects is a key challenge in biomedical 
applications where ensuring consistently high performance—even for the most challenging subjects—is critical.
To highlight the robustness of \method,
\Cref{fig:scatter} presents a scatter 
plot of subject-wise BACC scores comparing BatchNorm or InstanceNorm vs. \method across two selected target datasets.
{CHAT and MASS are two challenging datasets,
where the prediction performance is significantly lower than the other datasets.
For CHAT, most of the dots are below the diagonal, indicating that \method improves performance 
for 91\% of subjects against BatchNorm and 99\% of subjects against InstanceNorm,
with the largest gains observed for the hardest subjects, reinforcing its ability to handle challenging cases.
For MASS, \method improves performance for 75\% of subjects against BatchNorm and 69\% against InstanceNorm.
This demonstrates that \method is not only effective in improving overall performance but also excels
in enhancing the performance of the most challenging subjects.
}


\subsection{Illustration of PSD Normalization}

Figure \ref{fig:psd_normalization} shows how different normalization layers affect the PSD of signals at several stages of the network.
The input signals display limited variability, which explains why applying TMA as a pre-processing step provides only marginal benefit.
In the first row, corresponding to BatchNorm, the PSD variability increases with depth, a behavior that is undesirable for generalization.
TMA exhibits a similar pattern, as no normalization is applied within the network to counteract this accumulation of variance.
In contrast, both InstanceNorm and \method reduce PSD variability across samples.
However, InstanceNorm does not fully align the PSDs, and noticeable differences remain between samples.
\method, on the other hand, achieves strong alignment of PSDs across samples, indicating its ability to normalize the underlying temporal correlations.
This alignment is essential for improving robustness and generalization, particularly in settings involving distribution shifts.

\begin{figure}[ht]
    \centering
    \includegraphics[width=0.8\linewidth]{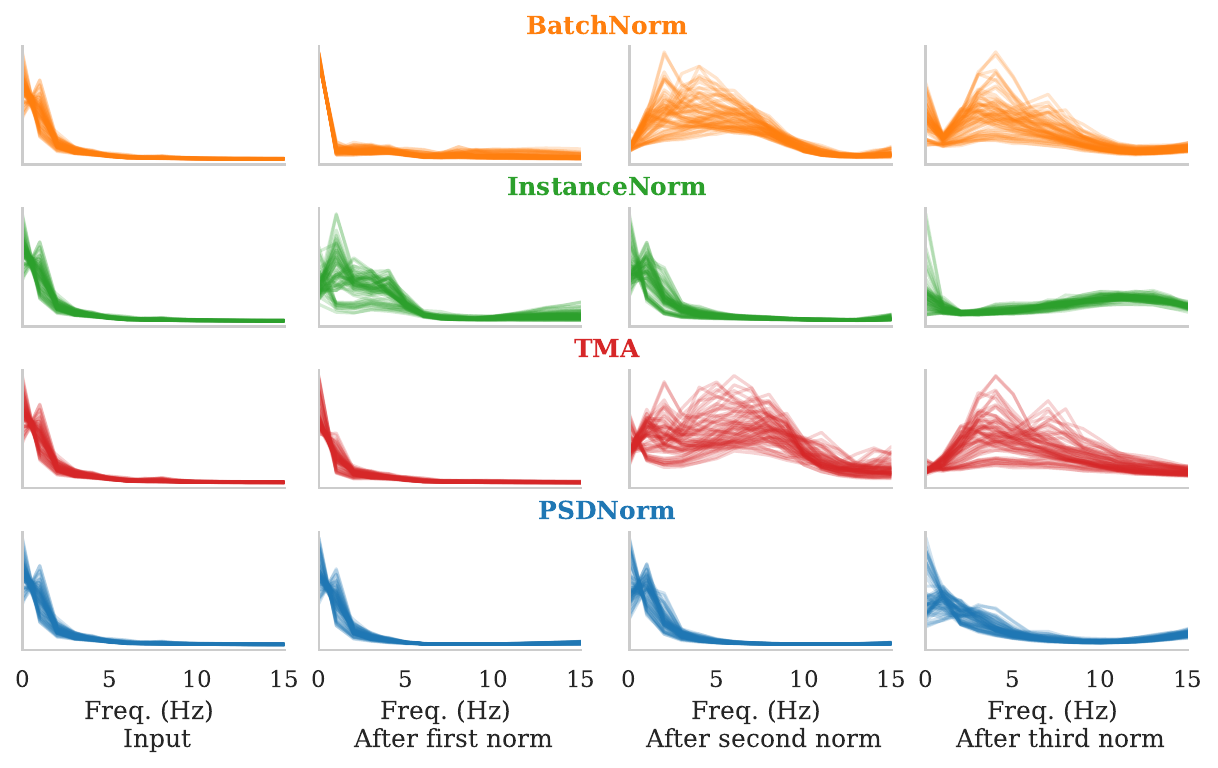}
    \caption{
        \textbf{Illustration of PSD normalization with different normalization layers.}
        The figure shows the PSD of  different segment of 17min from one subject batch as input, and after 3 encoders using different normalization layers.
    }
    \label{fig:psd_normalization}
\end{figure}

\color{black}
\section{Conclusion, Limitations, and Future Work}
\label{sec:conclusion}

{
This paper introduced \method, a normalization layer that aligns the power spectral density (PSD) of each signal to a geodesic barycenter.
By leveraging temporal correlations, \method offers a principled alternative to standard normalization layers.
Experiments on large-scale sleep staging datasets show that \method consistently improves performance, robustness, and data efficiency, especially under domain shift and limited-data settings—outperforming BatchNorm, LayerNorm, and InstanceNorm across architectures.
\\
While the results are promising, some limitations remain.
\method introduces a filter size hyperparameter ($\filtersize$) that controls normalization strength; although we provide default values that perform well across datasets, selecting it automatically in adaptive settings could be challenging.
\\
Despite these limitations, \method is flexible and easy to integrate into existing models.
Future work includes extending it to other signals such as audio and other biomedical applications.
}

\section*{Acknowledgements}
\noindent
This work was supported by the grants ANR-22-PESN-0012 to AC under the France 2030 program, 
ANR-20-CHIA-0016 and ANR-20-IADJ-0002 to AG while at Inria, and
ANR-23-ERCC-0006, ANR-25-PEIA-0005 and ANR-23-IACL-0005 to RF, 
all from Agence nationale de la recherche (ANR). This work is supported by Hi! PARIS and ANR/France 2030 program
(ANR-23-IACL-0005).
This project has also received funding from the European Union’s Horizon  Europe  research  
and  innovation  programme  under  grant  agreement  101120237 (ELIAS).

This project received funding from the Fondation de l'École polytechnique

This project was provided with computer and storage resources by GENCI at
IDRIS thanks to the grant 2025-AD011016052 and 2025-AD011016067 on the supercomputer
Jean Zay's the V100 \& H100 partitions.

This work was conducted at Inria, AG is presently employed by Meta Platforms. All the datasets used for this work were accessed and processed on the Inria compute infrastructure.

All the datasets used for this work were accessed and processed on the Inria compute infrastructures.
\\
\sloppy
Numerical computation was enabled by the 
scientific Python ecosystem: 
Matplotlib~\cite{matplotlib}, 
Scikit-learn~\cite{scikit-learn}, 
Numpy~\cite{numpy}, Scipy~\cite{scipy}, PyTorch~\cite{pytorch} and  MNE~\cite{GramfortEtAl2013a}.

\bibliography{biblio}
\bibliographystyle{iclr2026_conference}

\newpage
\appendix
\section{Appendix}
\subsection{Proof of the Bures-Wasserstein geodesic~\eqref{eq:running_barycenter_method} between covariance matrices of structure~\eqref{eq:covariance_structure}}
\label{app:geodesic}

\begin{proposition}
    Let $\bSigma^{(s)}$ and $\bSigma^{(t)}$ be two covariance matrices in $\bbR^{c\filtersize \times c\filtersize}$ following~\eqref{eq:covariance_structure}.
    Let us denote $\bP^{(s)}$ and $\bP^{(t)}$ the corresponding PSD matrices.
    The geodesic associated with the Bures-Wasserstein metric between $\bSigma^{(s)}$ and $\bSigma^{(t)}$ and parameterized by $\alpha \in [0,1]$ is $\bSigma(\alpha)$ following~\eqref{eq:covariance_structure} of PSD
    \begin{equation*}
        \bP(\alpha) = \left( \left(1 - \alpha\right) \bP^{(s)\odot\tfrac12} + \alpha \bP^{(t)\odot\tfrac12} \right)^{\odot 2} \;.
    \end{equation*}
\end{proposition}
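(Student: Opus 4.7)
The plan is to exploit the fact that both $\bSigma^{(s)}$ and $\bSigma^{(t)}$ share the same unitary eigenbasis $\bU \triangleq \bI_c \otimes \bF_{\filtersize}$, and hence commute. Commuting Bures-Wasserstein endpoints have a particularly simple geodesic, which reduces the statement to an element-wise identity on the PSDs.

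First, I would recall the closed form of the Bures-Wasserstein geodesic between two SPD matrices $\bA$ and $\bB$,
\begin{equation*}
\bSigma(\alpha) = \bigl((1-\alpha)\bI + \alpha \bT\bigr)\bA\bigl((1-\alpha)\bI + \alpha \bT\bigr), \qquad \bT = \bA^{-1/2}(\bA^{1/2}\bB\bA^{1/2})^{1/2}\bA^{-1/2},
\end{equation*}
and show that when $\bA$ and $\bB$ commute, the optimal transport map simplifies to $\bT = \bA^{-1/2}\bB^{1/2}$, so the geodesic collapses to
\begin{equation*}
\bSigma(\alpha) = \bigl((1-\alpha)\bA^{1/2} + \alpha \bB^{1/2}\bigr)^2.
\end{equation*}
This is the main technical step, but it is a standard fact about the Bures-Wasserstein geometry: on the set of SPD matrices simultaneously diagonalizable by a fixed unitary, the square-root map is an isometry to Euclidean space, so geodesics are affine in the square-root parameterization.

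Second, I would use the decomposition \eqref{eq:covariance_structure} together with the positivity of the PSD entries to compute the square roots explicitly:
\begin{equation*}
\bSigma^{(s)\,1/2} = \bU\,\diag\!\bigl(\mathrm{vec}(\bP^{(s)\odot 1/2})\bigr)\,\bU^*,\qquad \bSigma^{(t)\,1/2} = \bU\,\diag\!\bigl(\mathrm{vec}(\bP^{(t)\odot 1/2})\bigr)\,\bU^*.
\end{equation*}
Substituting into the commuting-case geodesic and using that $\bU$ is unitary, the diagonal (frequency) entries combine element-wise, giving
\begin{equation*}
\bSigma(\alpha) = \bU\,\diag\!\Bigl(\mathrm{vec}\bigl(((1-\alpha)\bP^{(s)\odot 1/2} + \alpha \bP^{(t)\odot 1/2})^{\odot 2}\bigr)\Bigr)\,\bU^*,
\end{equation*}
which is precisely of the form \eqref{eq:covariance_structure} with PSD
\begin{equation*}
\bP(\alpha) = \bigl((1-\alpha)\bP^{(s)\odot 1/2} + \alpha \bP^{(t)\odot 1/2}\bigr)^{\odot 2}.
\end{equation*}

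The main obstacle is the first step: cleanly justifying the commuting reduction of $\bT$ and the resulting square-root form of the geodesic. Once that is in place, the remainder is a direct manipulation of the block-Fourier diagonalization and the Hadamard square/square-root operations, and the block-diagonal structure (sensors uncorrelated) is preserved automatically because $\bU$ is a Kronecker product acting channel-wise.
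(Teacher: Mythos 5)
Your proof is correct and takes essentially the same route as the paper: both arguments exploit the simultaneous diagonalization of $\bSigma^{(s)}$ and $\bSigma^{(t)}$ in the basis $\bI_c \otimes \bF_{\filtersize}$ to reduce the Bures--Wasserstein geodesic to an element-wise computation on the PSDs. The only (cosmetic) difference is the starting formula: you use the transport-map parameterization $\bSigma(\alpha) = ((1-\alpha)\bI + \alpha\bT)\,\bA\,((1-\alpha)\bI + \alpha\bT)$ with the commuting simplification $\bT = \bA^{-1/2}\bB^{1/2}$, whereas the paper expands the equivalent closed form $(1-\alpha)^2\bSigma^{(s)} + \alpha^2\bSigma^{(t)} + \alpha(1-\alpha)\bigl[(\bSigma^{(s)}\bSigma^{(t)})^{1/2} + (\bSigma^{(t)}\bSigma^{(s)})^{1/2}\bigr]$ and recombines the squares element-wise.
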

\begin{proof}
    From~\cite{bhatia2019bures}, the geodesic associated with the Bures-Wasserstein metric between two covariance matrices $\bSigma^{(s)}$ and $\bSigma^{(t)}$ is given by
    \begin{equation}
        \gamma(\alpha) = (1 - \alpha)^2 \bSigma^{(s)} + \alpha^2 \bSigma^{(t)} + \alpha(1 - \alpha) \left[ (\bSigma^{(s)}\bSigma^{(t)})^{\frac12} + (\bSigma^{(t)}\bSigma^{(s)})^{\frac12} \right].
    \end{equation}
    where
    \begin{equation}
        (\bSigma^{(s)}\bSigma^{(t)})^{\frac12} = {\bSigma^{(s)}}^{\frac12} \left( {\bSigma^{(s)}}^{\frac12} \bSigma^{(t)} {\bSigma^{(s)}}^{\frac12} \right)^{\frac12} {\bSigma^{(s)}}^{-\frac12}.
    \end{equation}
    Since $\bSigma^{(s)}$ and $\bSigma^{(t)}$ diagonalize in the unitary basis $\bI_c \otimes \bF_{\filtersize}$, $\gamma(\alpha)$ also diagonalizes in this basis.
    Thus, we only have to compute the geodesic between the PSD matrices $\bP^{(s)}$ and $\bP^{(t)}$ and from now on, all operations are element-wise.
    Let $\bP(\alpha)$ be the PSD of $\gamma(\alpha)$, we have
    \begin{align}
        \bP(\alpha) &= (1-\alpha)^2 \bP^{(s)} + \alpha^2 \bP^{(t)} + \alpha(1-\alpha) \left[ (\bP^{(s)}\odot\bP^{(t)})^{\odot\frac12} + (\bP^{(t)}\odot\bP^{(s)})^{\odot\frac12} \right]\\
        &= (1-\alpha)^2 \bP^{(s)} + \alpha^2 \bP^{(t)} + 2\alpha(1-\alpha) (\bP^{(s)}\odot\bP^{(t)})^{\odot\frac12} \\
        &= \left( (1-\alpha) \bP^{(s)\odot\tfrac12} + \alpha \bP^{(t)\odot\tfrac12} \right)^{\odot 2}.
    \end{align}
    This concludes the proof.
\end{proof}

\subsection{Balanced datasets}
\begin{table}[ht]
    \centering
    \caption{Number of samples in the balanced datasets. Average and standard deviation (across LODO) are computed over 10 datasets left-out from the training set.}
    \label{tab:balanced_datasets}
    \begin{tabular}{lc}
    \toprule
    Balanced datasets & Number of subjects \\
    \midrule
    Balanced@40 & 360 $\pm$ 0 \\
    Balanced@100 & 787 $\pm$ 19 \\
    Balanced@200 & 1387 $\pm$ 63 \\
    Balanced@400 & 2466 $\pm$ 157 \\
    All subjects & 9929 $\pm$ 1659 \\
    \bottomrule
\end{tabular}
\end{table}
In the main paper, we report results across different training set sizes. 
Since the datasets are highly imbalanced (\eg ABC has 44 subjects, SHHS has 5,730),
we create balanced subsets by randomly selecting up to $N$ subjects per dataset. 
This avoids over-representing the largest dataset and ensures greater diversity in the training data. We consider four values of 
$N$: 40, 100, 200, and 400. The average number of subjects in each balanced set is shown in \cref{tab:balanced_datasets}. 
Notably, the balanced set with 400 subjects contains roughly four times less data than the full dataset.

\subsection{U-Time: CNN for time series segmentation}
\label{app:U-Sleep}

U-Time~\cite{perslev2019u,Perslev2021USleep} is a convolutional neural network (CNN) inspired by the U-Net architecture~\cite{ronneberger2015u}, designed for segmenting temporal sequences.
U-Time maps sequential inputs of arbitrary length to sequences of class labels on a freely chosen temporal scale.
The architecture is composed of several encoder and decoder blocks, with skip connections between them.

\paragraph{Encoder blocks}
A single encoder block is composed of a convolutional layer, an activation function, a BatchNorm layer, and a max pooling layer.
First, the convolution is applied to the input signal, followed by the activation function and the BatchNorm layer.
Finally, the max pooling layer downsamples the temporal dimension.
In the following, the pre-BatchNorm feature map is denoted $\mathbf{G}$ and the post-BatchNorm feature map $\widetilde{\mathbf{G}}$, \ie
$\widetilde{\mathbf{G}} \triangleq \text{BatchNorm}\left(\mathbf{G}\right)$.
Each encoder block downsamples by 2 the signal length but increases the number of channels.

\paragraph{Decoder blocks and Segmentation Head}
The decoding part of U-Time is symmetrical to the encoding part.
Each decoder block doubles the signal length and decreases the number of channels.
It is composed of a convolutional layer, an activation function, a BatchNorm layer, an upsampling layer and a concatenation layer of the skip connection of the corresponding encoding block.
Finally, the segmentation head applies two convolutional layers with an
activation function in between to output the final segmentation.
It should be noted that U-Time employs BatchNorm layers but other normalization layers, such as LayerNorm~\cite{ba2016layer} or InstanceNorm~\cite{ulyanov2016instance} are possible.

\paragraph{Implementation}
The architecture is inspired from Braindecode~\cite{schirrmeister2017deep}.
The implementation is improved to make it more efficient and faster. One epoch of training takes about 30 min on a single H100 GPU.

\subsection{Architecture: CNNTransformer}

The CNNTransformer is a hybrid architecture designed for multichannel time series classification inspired by transformers for EEG-Data~\cite{wang_caresleepnet_2024,phan2022sleeptransformer,yang2023bios,thapa_multimodal_2025}.
 It combines convolutional feature extraction with long-range temporal modeling via a Transformer encoder {at epoch-level}.
 The model processes an input tensor of shape $(B, S, C, T)$, where $B$ is the batch size, $S$ is the number of temporal segments, 
 $C$ is the number of input channels, and $T$ is the number of time samples per segment.
 {It outputs a tensor of shape $(B, n_{\text{classes}}, S)$, where $n_{\text{classes}}$ is the number of classes and $S$ is the number of epochs.}

The architecture consists of the following components:

\begin{itemize}
    \item \textbf{Reshaping:} The input is first permuted and reshaped to a 3D tensor of shape 
    $(B, C, S \cdot T)$ to be compatible with 1D convolutional layers applied along the temporal dimension.
    \item \textbf{CNN-based Feature Extractor:} A stack of 10 Conv1D layers, each followed by ELU activation 
    and Batch Normalization. Some layers use a stride greater than 1 to progressively reduce the temporal resolution. 
    This block extracts local temporal patterns and increases the representational capacity up to a dimensionality of $d_{\text{model}}$.
    \item \textbf{Adaptive Pooling:} An AdaptiveAvgPool1D layer reduces the temporal length to a fixed number of steps ($S$), 
    independent of the input sequence length. This step ensures a consistent temporal resolution before the Transformer.
    \item \textbf{Positional Encoding:} Learnable positional embeddings of shape $(1, S, d_{\text{model}})$ are added to the feature representations 
    to preserve temporal ordering before passing through the Transformer encoder.
    \item \textbf{Transformer Encoder:} A standard Transformer encoder composed of $L$ layers, each consisting of multi-head self-attention and a feedforward sublayer.
     This module models global temporal dependencies across the $S$ steps.
    \item \textbf{Classification Head:} After transposing the data to shape $(B, d_{\text{model}}, S)$, 
    a final 1D convolution with a kernel size of 1 projects the output to $n_{\text{classes}}$, yielding predictions for each epoch segment.
\end{itemize}

{The model is trained end-to-end using standard optimization techniques.}
The use of adaptive pooling and self-attention enables it to generalize across variable-length inputs while maintaining temporal resolution. 
A full summary of the architecture is provided in Table~\ref{tab:archi_cnntransformer}.

\label{sec:cnn_transformer}
\begin{table}[ht]
    \centering
    \caption{Architecture overview of the CNNTransformer model. {In pratice, $d_{\text{model}}$ is set to 768, $n_{\text{head}}$ to 8, and $S$ is 35.}}
    \resizebox{\textwidth}{!}{
    \begin{tabular}{llp{7cm}l}
    \toprule
    \textbf{Stage} & \textbf{Operation} & \textbf{Details} & \textbf{Output Shape} \\
    \midrule
    Input & Raw signal & Multichannel EEG signal with $S$ segments and $T$ time samples per segment & $(B, S, C, T)$ \\
    \midrule
    Reshape & Permute \& flatten & Rearranged as $(B, C, S \cdot T)$ to process with 1D convolutions & $(B, C, S \cdot T)$ \\
    \midrule
    Feature Extractor & 1D CNN stack & 10-layer sequence of Conv1D $\rightarrow$ ELU $\rightarrow$ BatchNorm; includes temporal downsampling via stride & $(B, d_{\text{model}}, T')$ \\
    \midrule
    Temporal Pooling & AdaptiveAvgPool1D & Downsamples to fixed temporal resolution defined by $S$ & $(B, d_{\text{model}}, S)$ \\
    \midrule
    Positional Encoding & Learnable embeddings & Added to temporal dimension to encode temporal order before transformer layers & $(B, d_{\text{model}}, S)$ \\
    \midrule
    Transformer Encoder & Multi-head attention & $2$ Transformer layers with $d_{\text{model}}$ embedding dimension, $n_{\text{head}}$ heads, and feedforward sublayers & $(B, d_{\text{model}}, S)$ \\
    \midrule
    Classifier & Linear projection & Projects feature vectors to class logits at each epoch time step & $(B, n_{\text{classes}}, S)$ \\
    \bottomrule
    \end{tabular}
    }
    \label{tab:archi_cnntransformer}
    \end{table}

\subsection{Equation for BatchNorm and InstanceNorm}
\paragraph{BatchNorm}
The BatchNorm layer~\cite{ioffe2015batch} normalizes features maps in a neural network to have zero mean and unit variance.
At train time, given a batch $\cB = \{\bG^{(1)}, \dots, \bG^{(N)}\} \subset \bbR^{c \times \ell}$ of $N$ pre-BatchNorm feature maps and for all $j,m,l \in \intset{N}\times \intset{c}\times \intset{\ell}$, the BatchNorm layer is computed as
\begin{equation}
    \label{eq:batch_norm}
    \widetilde{G}_{m,l}^{(j)} = \gamma_m \frac{G_{m,l}^{(j)} - \widehat{\mu}_m}{\sqrt{\widehat{\sigma}_m^2 + \varepsilon}} + \beta_m \;,
\end{equation}
where $\bgamma, \bbeta \in \bbR^{c}$ are learnable parameters.
The mean and standard deviation $\widehat{\bmu} \in \bbR^{c}$ and $\widehat{\bsigma} \in \bbR^{c}$ are computed across the time and the batch,
\begin{equation}
    \begin{aligned}
        \widehat{\mu}_m &\triangleq \frac{1}{N\ell}\sum_{j=1}^{N}\sum_{l=1}^{\ell} G_{m,l}^{(j)},\\
        \widehat{\sigma}_m^2 &\triangleq \frac{1}{N\ell}\sum_{j=1}^{N}\sum_{l=1}^{\ell} \left(G_{m,l}^{(j)} - \widehat{\mu}_m\right)^{2}.
    \end{aligned}
\end{equation}
At test time, the mean and variance $\widehat{\bmu}$ and $\widehat{\bsigma}$ are replaced by their running mean and variance, also called exponential moving average, estimated during training.

\paragraph{InstanceNorm}
Another popular normalization is the InstanceNorm layer~\cite{ulyanov2016instance}.
During training, InstanceNorm operates similarly to~\eqref{eq:batch_norm}, but the mean and variance are computed per sample instead of across the batch dimension, \ie $\widehat{\mu}_m^{(j)}$ and $\widehat{\sigma}_m^{(j)}$ are computed for each sample $j$,
\begin{equation}
    \begin{aligned}
        \widehat{\mu}_m^{(j)} &\triangleq \frac{1}{\ell}\sum_{l=1}^{\ell} G_{m,l}^{(j)} \;, \\
        (\widehat{\sigma}_m^{(j)})^2 &\triangleq \frac{1}{\ell} \sum_{l=1}^{\ell} \left(G_{m,l}^{(j)} - \widehat{\mu}_m^{(j)}\right)^{2} \; .
    \end{aligned}
\end{equation}
Hence, each sensor of each sample is normalized independently of the others.
At test time, InstanceNorm behaves identically to its training phase and therefore does not rely on running statistics contrary to the BatchNorm.

\subsection{Sensitivity to Filter Size}
\label{app:filter_size}
\begin{figure}
    \centering
    \includegraphics[width=0.7\linewidth]{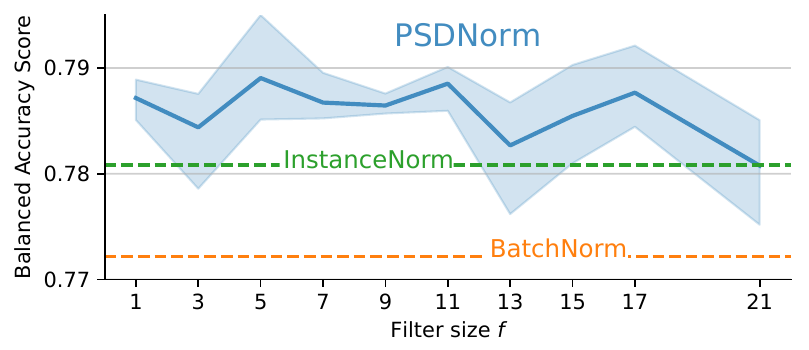}
    \captionof{figure}{
        \textbf{Performance of \method with varying filter sizes.}
        The BACC score is plotted against the filter size used with U-Sleep.
    }
    \label{fig:sensitive}
\end{figure}
{The filter size $\filtersize$ in \method controls the temporal context used for normalization, influencing the strength of adaptation to temporal variations.
\Cref{fig:sensitive} shows the impact of different $\filtersize$ values on the BACC score across datasets using U-Sleep trained on balanced@400.
This experiments shows that for any $\filtersize$, \method consistently improves performance compared to other normalization techniques.
Taking a $\filtersize$ between 5 and 11 yields the best results, with a peak at $\filtersize=5$.
Smaller values (e.g., $\filtersize=1$, equivalent to InstanceNorm) provide less adaptation, while larger values (e.g., $\filtersize=21$) may over-smooth temporal variations, leading to diminished performance.
Overall, the results shows that $\filtersize$ is not so sensitive yielding good performance for a wide range of values.}

\subsection{F1 score vs. Balanced Accuracy}
In the main paper, we report Balanced Accuracy scores, which account for class imbalance in sleep stage classification. 
Prior work, such as the U-Time paper~\cite{perslev2019u}, uses the F1 score to evaluate performance. 
In Table~\ref{tab:f1_scores}, we report F1 scores on the left-out datasets. 
These scores are slightly higher than the Balanced Accuracy scores and are comparable to those reported in the U-Time paper.

Our main findings remain consistent: BatchNorm and InstanceNorm are the strongest baselines and achieve the best performance on 3 out of 10 datasets. 
PSDNorm outperforms all other methods on 7 out of 10 datasets. 
The same trend holds for the balanced@400 setup, where PSDNorm again outperforms all baselines on 7 datasets, 
while InstanceNorm is never the top performer.

These results confirm that our implementation achieves state-of-the-art performance in sleep stage classification.
Moreover, PSDNorm maintains its advantage even in data-limited settings
\begin{table*}[ht]
    \centering
    \caption{
        \textbf{F1 scores of different methods on the left-out datasets.}
        The lower section displays results for training over datasets balanced @400 \ie \textbf{small-scale dataset}, while the upper section presents results for training over all subjects \ie \textbf{large-scale dataset}.
        The best scores are highlighted in \textbf{bold}.
        The reported standard deviations indicate performance variability across 3 seeds.
    }
    \resizebox{\textwidth}{!}{
        \begin{tabular}{lcccccc}
        \toprule
        & Dataset & BatchNorm & LayerNorm & InstanceNorm & TMA & PSDNorm(F=5) \\
        \midrule
        \multirow[t]{11}{*}{\rotatebox[origin=r]{90}{All subjects\;\;\;\;\;\;\;\;\;\;}} & ABC & $81.00_{\pm 0.11}$ & $79.50_{\pm 0.49}$ & $80.56_{\pm 0.39}$ & $80.89_{\pm 0.06}$ & $\mathbf{81.12_{\pm 0.37}}$ \\
        & CCSHS & $\mathbf{89.83_{\pm 0.19}}$ & $89.01_{\pm 0.43}$ & $89.39_{\pm 0.16}$ & $89.37_{\pm 0.11}$ & $89.13_{\pm 0.17}$ \\
        & CFS & $88.30_{\pm 0.52}$ & $87.39_{\pm 0.06}$ & $88.45_{\pm 0.17}$ & $88.28_{\pm 0.37}$ & $\mathbf{88.52_{\pm 0.15}}$ \\
        & CHAT & $65.77_{\pm 4.06}$ & $65.25_{\pm 3.96}$ & $71.35_{\pm 2.75}$ & $71.80_{\pm 2.66}$ & $\mathbf{72.16_{\pm 2.21}}$ \\
        & HOMEPAP & $77.06_{\pm 0.14}$ & $76.62_{\pm 1.06}$ & $77.50_{\pm 0.46}$ & $\mathbf{77.82_{\pm 0.64}}$ & $77.30_{\pm 0.24}$ \\
        & MASS & $77.27_{\pm 1.42}$ & $74.21_{\pm 2.05}$ & $75.12_{\pm 2.08}$ & $\mathbf{77.74_{\pm 1.05}}$ & $76.00_{\pm 3.00}$ \\
        & MROS & $\mathbf{85.53_{\pm 0.48}}$ & $84.02_{\pm 0.95}$ & $85.22_{\pm 0.19}$ & $85.13_{\pm 0.98}$ & $85.02_{\pm 0.42}$ \\
        & PhysioNet & $74.98_{\pm 1.84}$ & $74.29_{\pm 1.50}$ & $75.07_{\pm 1.05}$ & $\mathbf{76.01_{\pm 0.73}}$ & $75.29_{\pm 1.21}$ \\
        & SHHS & $78.95_{\pm 0.92}$ & $78.04_{\pm 1.21}$ & $80.30_{\pm 1.29}$ & $78.84_{\pm 0.43}$ & $\mathbf{80.32_{\pm 0.91}}$ \\
        & SOF & $86.30_{\pm 0.40}$ & $85.82_{\pm 0.22}$ & $86.57_{\pm 0.60}$ & $86.31_{\pm 0.27}$ & $\mathbf{86.99_{\pm 0.33}}$ \\
        \midrule
        & Mean(Dataset) & $80.50_{\pm 0.51}$ & $79.41_{\pm 0.73}$ & $80.95_{\pm 0.36}$ & $\mathbf{81.22_{\pm 0.20}}$ & $81.19_{\pm 0.11}$ \\
        & Mean(Subject) & $80.05_{\pm 0.78}$ & $79.09_{\pm 0.90}$ & $81.31_{\pm 0.83}$ & $80.59_{\pm 0.19}$ & $\mathbf{81.39_{\pm 0.69}}$ \\
        \midrule
        \multirow[t]{11}{*}{\rotatebox[origin=r]{90}{Balanced@400 \;\;\;\;\;\;\;\;\;}} & ABC & $\mathbf{79.80_{\pm 0.34}}$ & $77.86_{\pm 0.80}$ & $78.36_{\pm 1.20}$ & $79.49_{\pm 0.68}$ & $78.08_{\pm 0.78}$ \\
        & CCSHS & $88.32_{\pm 0.49}$ & $87.22_{\pm 0.51}$ & $88.73_{\pm 0.52}$ & $88.47_{\pm 0.62}$ & $\mathbf{88.79_{\pm 0.99}}$ \\
        & CFS & $87.01_{\pm 0.18}$ & $85.61_{\pm 0.16}$ & $\mathbf{87.62_{\pm 0.27}}$ & $87.37_{\pm 0.44}$ & $87.06_{\pm 0.77}$ \\
        & CHAT & $66.56_{\pm 1.42}$ & $61.32_{\pm 2.25}$ & $64.19_{\pm 4.63}$ & $69.90_{\pm 2.74}$ & $\mathbf{71.86_{\pm 0.95}}$ \\
        & HOMEPAP & $76.20_{\pm 1.25}$ & $76.15_{\pm 1.13}$ & $77.66_{\pm 0.58}$ & $76.83_{\pm 0.97}$ & $\mathbf{77.85_{\pm 1.29}}$ \\
        & MASS & $76.06_{\pm 1.69}$ & $73.95_{\pm 5.80}$ & $76.94_{\pm 1.12}$ & $76.32_{\pm 0.36}$ & $\mathbf{77.16_{\pm 1.73}}$ \\
        & MROS & $83.69_{\pm 0.39}$ & $82.22_{\pm 1.27}$ & $83.95_{\pm 0.53}$ & $\mathbf{84.15_{\pm 0.46}}$ & $83.51_{\pm 0.84}$ \\
        & PhysioNet & $\mathbf{76.26_{\pm 1.27}}$ & $70.40_{\pm 0.14}$ & $73.84_{\pm 0.93}$ & $75.24_{\pm 2.72}$ & $73.51_{\pm 3.05}$ \\
        & SHHS & $76.98_{\pm 0.70}$ & $75.98_{\pm 0.22}$ & $79.12_{\pm 0.96}$ & $78.19_{\pm 0.90}$ & $\mathbf{79.26_{\pm 1.35}}$ \\
        & SOF & $85.49_{\pm 0.58}$ & $84.23_{\pm 1.30}$ & $85.50_{\pm 0.86}$ & $\mathbf{85.56_{\pm 0.90}}$ & $84.14_{\pm 1.05}$ \\
        \midrule
        & Mean(Dataset) & $79.64_{\pm 0.41}$ & $77.57_{\pm 0.73}$ & $79.59_{\pm 0.25}$ & $\mathbf{80.15_{\pm 0.26}}$ & $80.12_{\pm 0.57}$ \\
         & Mean(Subject)  & $78.57_{\pm 0.55}$ & $76.86_{\pm 0.22}$ & $79.53_{\pm 0.30}$ & $79.70_{\pm 0.66}$ & $\mathbf{80.29_{\pm 0.68}}$ \\
        \bottomrule
        \end{tabular}
    }
    
    \label{tab:f1_scores}
\end{table*}
\subsection{Impact of Whitening and Target Covariance}
\label{app:whitening}
As explained in the main paper, InstanceNorm is a special case of \method\ 
with $F = 1$ and an identity target covariance matrix (i.e., whitening). \method\ 
extends this by (i) using temporal context with $F > 1$, and 
(ii) mapping the PSD to a target covariance matrix, such as a barycenter 
(i.e., colorization).

In this section, we evaluate the impact of whitening on the performance of \method, 
to assess the benefit of using a barycenter as the target covariance matrix.
Table~\ref{tab:whitening} reports results on 10 datasets (balanced@400), 
with and without whitening.

Whitening improves performance on only one dataset (CCSHS), while projecting 
to the barycenter yields the best results on 6 datasets.

This suggests that, while whitening may help when $F = 1$, it is less effective 
when $F > 1$. Using a barycenter leads to a more robust and stable target 
covariance matrix.

\begin{table*}[ht]
    \centering
    \caption{Impact of the whitening on the performance of \method on the 10 datasets balanced @ 400.}
        \begin{tabular}{lllll}
                    \toprule
                     \multirow{2}{*}{Dataset}& \multirow{2}{*}{BatchNorm} & \multirow{2}{*}{InstanceNorm} & \multicolumn{2}{c}{PSDNorm} \\
                     \cmidrule{4-5}
                     &  & & Barycenter & Whitening \\
                    \midrule
                    ABC & $78.26_{\pm 1.33}$ & $\mathbf{78.73_{\pm 0.42}}$ & $78.18_{\pm 0.68}$ & $77.86_{\pm 1.33}$ \\
                    CCSHS & $87.42_{\pm 0.16}$ & $87.62_{\pm 0.42}$ & $87.58_{\pm 0.30}$ & $\mathbf{87.80_{\pm 0.23}}$ \\
                    CFS & $84.32_{\pm 0.57}$ & $\mathbf{84.72_{\pm 0.33}}$ & $84.29_{\pm 0.36}$ & $84.01_{\pm 0.60}$ \\
                    CHAT & $66.55_{\pm 0.88}$ & $64.43_{\pm 4.41}$ & $\mathbf{70.28_{\pm 1.70}}$ & $69.07_{\pm 3.73}$ \\
                    HOMEPAP & $75.25_{\pm 0.50}$ & $76.47_{\pm 0.63}$ &  $\mathbf{76.83_{\pm 0.61}}$ & $76.13_{\pm 0.93}$ \\
                    MASS & $70.00_{\pm 1.91}$ & $71.52_{\pm 1.13}$ & $\mathbf{72.77_{\pm 1.09}}$ & $69.11_{\pm 1.51}$ \\
                    MROS & $80.37_{\pm 0.20}$ & $80.28_{\pm 0.21}$ & $80.26_{\pm 0.11}$ & $\mathbf{80.50_{\pm 0.75}}$ \\
                    PhysioNet & $\mathbf{75.81_{\pm 0.13}}$ & $74.68_{\pm 0.55}$ & $74.82_{\pm 2.11}$ & $74.58_{\pm 1.57}$ \\
                    SHHS & $76.44_{\pm 0.92}$ & $78.68_{\pm 0.37}$ &  $\mathbf{78.88_{\pm 0.68}}$ & $78.77_{\pm 0.67}$ \\
                    SOF & $81.08_{\pm 1.14}$ & $80.68_{\pm 1.38}$ & $79.49_{\pm 0.41}$ & $80.10_{\pm 0.62}$ \\
                    \hline
                    Mean & $77.55_{\pm 0.34}$ & $77.78_{\pm 0.46}$ & $\mathbf{78.34_{\pm 0.42}}$ & $77.79_{\pm 0.30}$ \\
                    \bottomrule
                    \end{tabular}
    \label{tab:whitening}
\end{table*}

\subsection{Generalization of PSDNorm in CNNTransformer}
\label{app:cnntransformer}

The CNNTransformer architecture is a hybrid model that combines 
convolutional and transformer layers for time series classification.

The main paper presents a critical difference diagram for the CNNTransformer 
evaluated on datasets balanced@400. It shows that PSDNorm with $F=5$ 
is the best-performing normalization layer.

In Table~\ref{tab:cnntransformer}, we report the results of different normalization 
layers used in the CNNTransformer architecture on datasets balanced@400.

First, we observe that CNNTransformer performs slightly below U-Sleep.
Second, BatchNorm and InstanceNorm are the best performers 
on one and two datasets respectively, while PSDNorm achieves the 
best performance on 7 out of 10 datasets.

PSDNorm with $F=5$ outperforms BatchNorm by a margin of 0.9 
and InstanceNorm by 0.54 in average score.

These results highlight that PSDNorm is a plug-and-play normalization 
layer that can be seamlessly integrated into various architectures to reduce 
feature space variability.

\begin{table*}
    \centering
    \caption{Different normalization layers used in the CNNTransformer architecture for datasets balanced@400.}
        \begin{tabular}{lcccc}
        \toprule
        Dataset & BatchNorm &  InstanceNorm & TMA & PSDNorm \\
        \midrule
        ABC & $76.99_{\pm 0.53}$ & $75.40_{\pm 0.36}$ & $\mathbf{77.50_{\pm 0.54}}$ & $76.31_{\pm 0.46}$ \\
        CCSHS & $86.75_{\pm 0.48}$ & $\mathbf{87.00_{\pm 0.34}}$ & $86.73_{\pm 0.25}$ & $86.92_{\pm 0.32}$ \\
        CFS & $83.32_{\pm 0.35}$ & $\mathbf{83.77_{\pm 0.34}}$ & $83.16_{\pm 0.38}$ & $83.71_{\pm 0.29}$ \\
        CHAT & $66.44_{\pm 0.49}$ & $66.40_{\pm 2.55}$ & $66.47_{\pm 1.37}$ & $\mathbf{70.04_{\pm 0.37}}$ \\
        HOMEPAP & $74.81_{\pm 1.36}$ & $\mathbf{75.92_{\pm 0.44}}$ & $74.76_{\pm 0.83}$ & $75.26_{\pm 0.55}$ \\
        MASS & $71.51_{\pm 0.47}$ & $71.70_{\pm 1.17}$ & $70.57_{\pm 0.80}$ & $\mathbf{72.55_{\pm 0.81}}$ \\
        MROS & $79.77_{\pm 0.31}$ & $79.74_{\pm 0.55}$ & $\mathbf{79.85_{\pm 0.08}}$ & $79.77_{\pm 0.30}$ \\
        PhysioNet & $72.54_{\pm 0.34}$ & $74.36_{\pm 0.84}$ & $71.39_{\pm 1.38}$ & $\mathbf{74.95_{\pm 0.41}}$ \\
        SHHS & $75.34_{\pm 0.34}$ & $76.55_{\pm 0.92}$ & $75.15_{\pm 0.98}$ & $\mathbf{77.26_{\pm 0.57}}$ \\
        SOF & $80.63_{\pm 0.60}$  & $80.78_{\pm 0.54}$ & $\mathbf{81.03_{\pm 0.48}}$ & $80.31_{\pm 0.90}$ \\
        \midrule
        Mean & $76.38_{\pm 0.17}$ & $77.07_{\pm 0.28}$ & $76.30_{\pm 0.57}$ & $\mathbf{77.83_{\pm 0.36}}$ \\
        \bottomrule
        \end{tabular}
    \label{tab:cnntransformer}
\end{table*}

\subsection{Evolution of Performance with Training Set Size}
\begin{figure}
    \centering
    \includegraphics[width=0.7\linewidth]{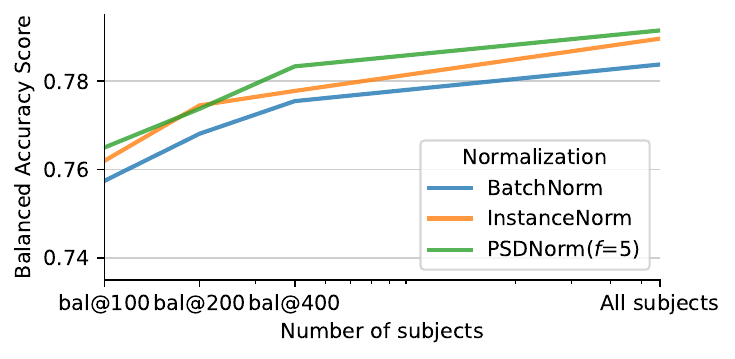}
    \captionof{figure}{
        \textbf{Performance of \method and BatchNorm with varying training set sizes.}
        The BACC score is plotted against the number of training subjects used with U-Sleep.
    }
    \label{fig:number_subjects}
\end{figure}
The choice of $\filtersize$ in \method controls the intensity of the normalization: larger $\filtersize$ 
provide stronger normalization, while smaller $\filtersize$ allow more flexibility in the model.
In \Cref{fig:number_subjects}, we evaluate its impact across different training set sizes and 
observe a clear trend: when trained on fewer subjects, larger filter sizes yield better performance (\ie $\filtersize=17$), 
whereas smaller filter sizes are more effective with larger datasets (\ie $\filtersize={5}$).
This suggests that with limited data, stronger normalization helps prevent overfitting, 
while with more data, a more flexible model is preferred.
On average, \method\ with $\filtersize=5$ offers a good compromise, 
achieving one of the best performances across all training set sizes.

\subsection{Critical Difference Diagram for U-Sleep on all subjects}
\label{app:critical_diagram}
The main paper presents the critical difference diagram for U-Sleep on the dataset balanced@400.  
Figure~\ref{fig:critical_diagram_all_subjects} extends this analysis to all subjects across datasets.
The conclusion remains consistent: PSDNorm with $F=5$ is the best-performing normalization layer,  
while BatchNorm performs the worst.
Interestingly, PSDNorm with $F=17$ ranks second to last, suggesting that overly strong adaptation  
can hurt performance when the dataset is large.

\begin{figure}[ht]
    \centering
    \includegraphics[width=0.8\textwidth]{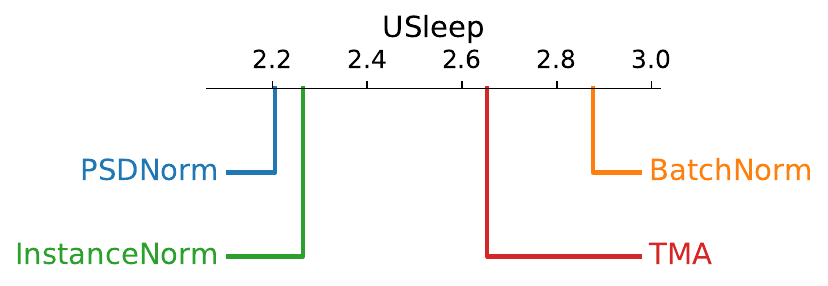}
    \caption{Critical difference diagram for U-Sleep on all subjects.}
    \label{fig:critical_diagram_all_subjects}
\end{figure}

\subsection{Computational time of PSDNorm}
\label{app:psdnorm_time}
\begin{table}[!ht]
    \caption{\textbf{Computational time of PSDNorm compared to BatchNorm and InstanceNorm for USleep and CNNTransformer.}
    The time is done for leave out the dataset CHAT and with the dataset balanced @400.
    The time is averaged over 3 runs and reported in seconds.}
    \centering
    \label{tab:psdnorm_time}
    \resizebox{\textwidth}{!}{

    \begin{tabular}{llll}
    \toprule
        Model & Normalization & Time per epoch (sec) & Time of inference (sec) \\
        \midrule
        USleep & BatchNorm & $\mathbf{161.94 \pm 10.18}$ & $95.63 \pm 4.85$ \\ 
        USleep & InstanceNorm & $258.71 \pm 2.15 (*)$ & $\mathbf{93.40 \pm 8.18}$ \\ 
        USleep & PSDNorm($\filtersize=5$) & $172.85 \pm 4.05$ & $98.72 \pm 12.09$ \\
        \midrule
        CNNTransformer & BatchNorm & $130.88 \pm 2.67$ & $93.03 \pm 5.25$ \\ 
        CNNTransformer & InstanceNorm & $\mathbf{127.47 \pm 5.14}$ & $92.70 \pm 4.11$ \\ 
        CNNTransformer & PSDNorm($\filtersize=5$) & $152.83 \pm 2.29$ & $\mathbf{92.57 \pm 2.64}$ \\ 
        \bottomrule
    \end{tabular}
    }
\end{table}

One important aspect of normalization layers is their computational cost, which can impact training and inference times.
Table~\ref{tab:psdnorm_time} compares the computational time of PSDNorm with BatchNorm and InstanceNorm in both U-Sleep and CNNTransformer architectures.
In U-Sleep, PSDNorm takes 172.85 seconds per epoch, which is slightly higher than BatchNorm (161.94 seconds) but significantly lower than InstanceNorm (258.71 seconds).
The high cost of InstanceNorm is due to the fact that the torch.compile was not working for Usleep and InstanceNorm.
For inference, PSDNorm takes 98.72 seconds, which is comparable to BatchNorm (95.63 seconds) but slightly higher than InstanceNorm (93.40 seconds).

In CNNTransformer, PSDNorm takes 152.83 seconds per epoch, which is higher than BatchNorm (130.88 seconds) and InstanceNorm (127.47 seconds).
However, for inference, PSDNorm is equivalent to both BatchNorm and InstanceNorm.
The modest computational overhead introduced by PSDNorm is a worthwhile trade-off for its superior performance. 
This efficiency is enabled by the highly optimized implementation of the Fast Fourier Transform (FFT) on GPUs.

\subsection{Class-wise performance}
\label{app:classwise_performance}

\begin{table}
    \caption{Class-wise F1 scores for BatchNorm layer on datasets balanced @ 400.}
    \centering
    \begin{tabular}{lllllll}
    \toprule
    Dataset & Wake & N1 & N2 & N3 & REM & F1 \\
    \midrule
    ABC & $86.07_{\pm 0.68}$ & $53.97_{\pm 0.36}$ & $80.05_{\pm 0.52}$ & $71.35_{\pm 1.45}$ & $88.43_{\pm 0.13}$ & $79.80_{\pm 0.34}$ \\
    CCSHS & $95.22_{\pm 0.46}$ & $48.08_{\pm 2.87}$ & $84.88_{\pm 0.81}$ & $86.13_{\pm 1.01}$ & $88.54_{\pm 0.93}$ & $88.32_{\pm 0.49}$ \\
    CFS & $94.64_{\pm 0.10}$ & $42.69_{\pm 0.39}$ & $82.27_{\pm 0.75}$ & $77.20_{\pm 0.30}$ & $86.94_{\pm 0.34}$ & $87.01_{\pm 0.18}$ \\
    CHAT & $78.35_{\pm 1.31}$ & $35.57_{\pm 2.64}$ & $52.63_{\pm 2.77}$ & $72.67_{\pm 1.25}$ & $76.12_{\pm 1.50}$ & $66.56_{\pm 1.42}$ \\
    HOMEPAP & $84.51_{\pm 1.01}$ & $41.45_{\pm 0.71}$ & $73.88_{\pm 2.00}$ & $57.41_{\pm 1.72}$ & $82.52_{\pm 1.33}$ & $76.20_{\pm 1.25}$ \\
    MASS & $66.93_{\pm 4.65}$ & $40.02_{\pm 2.00}$ & $78.73_{\pm 0.67}$ & $67.01_{\pm 0.39}$ & $75.80_{\pm 5.65}$ & $76.06_{\pm 1.69}$ \\
    MROS & $94.63_{\pm 0.25}$ & $41.73_{\pm 1.11}$ & $73.74_{\pm 0.72}$ & $47.86_{\pm 0.33}$ & $82.70_{\pm 0.46}$ & $83.69_{\pm 0.39}$ \\
    PhysioNet & $89.22_{\pm 0.49}$ & $46.01_{\pm 0.90}$ & $73.93_{\pm 3.04}$ & $55.04_{\pm 1.56}$ & $77.30_{\pm 0.61}$ & $76.26_{\pm 1.27}$ \\
    SHHS & $85.68_{\pm 1.57}$ & $32.56_{\pm 1.34}$ & $72.48_{\pm 2.00}$ & $61.91_{\pm 1.31}$ & $79.09_{\pm 1.07}$ & $76.98_{\pm 0.70}$ \\
    SOF & $93.91_{\pm 0.15}$ & $38.29_{\pm 1.09}$ & $79.15_{\pm 0.57}$ & $71.86_{\pm 3.17}$ & $86.49_{\pm 0.18}$ & $85.49_{\pm 0.58}$ \\
    \midrule
    Mean & $86.91_{\pm 1.07}$ & $42.04_{\pm 1.34}$ & $75.17_{\pm 1.39}$ & $66.84_{\pm 1.25}$ & $82.39_{\pm 1.22}$ & $79.64_{\pm 0.83}$ \\
    \bottomrule
    \end{tabular}
\end{table}

\begin{table}
    \caption{Class-wise F1 scores for LayerNorm layer with $\filtersize=5$ on datasets balanced @ 400.}
    \centering
    \begin{tabular}{lllllll}
    \toprule
    Dataset & Wake & N1 & N2 & N3 & REM & F1 \\
    \midrule
    ABC & $83.29_{\pm 2.94}$ & $52.93_{\pm 0.96}$ & $78.07_{\pm 1.04}$ & $68.91_{\pm 2.93}$ & $85.05_{\pm 0.81}$ & $77.86_{\pm 0.80}$ \\
    CCSHS & $93.66_{\pm 0.40}$ & $40.68_{\pm 1.11}$ & $84.80_{\pm 1.28}$ & $86.46_{\pm 0.82}$ & $85.39_{\pm 0.62}$ & $87.22_{\pm 0.51}$ \\
    CFS & $93.78_{\pm 0.50}$ & $38.56_{\pm 2.07}$ & $81.15_{\pm 0.11}$ & $75.39_{\pm 1.77}$ & $83.68_{\pm 0.90}$ & $85.61_{\pm 0.16}$ \\
    CHAT & $72.07_{\pm 2.84}$ & $30.11_{\pm 0.63}$ & $46.82_{\pm 6.98}$ & $70.45_{\pm 2.92}$ & $68.11_{\pm 4.07}$ & $61.32_{\pm 2.25}$ \\
    HOMEPAP & $83.58_{\pm 1.72}$ & $43.85_{\pm 1.55}$ & $73.97_{\pm 1.76}$ & $57.54_{\pm 1.92}$ & $79.16_{\pm 1.19}$ & $76.15_{\pm 1.13}$ \\
    MASS & $60.73_{\pm 4.02}$ & $40.56_{\pm 0.87}$ & $77.39_{\pm 6.28}$ & $65.91_{\pm 4.01}$ & $69.92_{\pm 10.53}$ & $73.95_{\pm 5.80}$ \\
    MROS & $94.12_{\pm 0.10}$ & $38.41_{\pm 2.21}$ & $71.48_{\pm 3.66}$ & $47.85_{\pm 0.37}$ & $79.05_{\pm 0.66}$ & $82.22_{\pm 1.27}$ \\
    PhysioNet & $88.04_{\pm 0.37}$ & $44.72_{\pm 2.03}$ & $64.53_{\pm 1.93}$ & $48.49_{\pm 0.54}$ & $68.17_{\pm 7.15}$ & $70.40_{\pm 0.14}$ \\
    SHHS & $84.61_{\pm 2.49}$ & $32.89_{\pm 1.92}$ & $72.02_{\pm 2.48}$ & $61.13_{\pm 1.59}$ & $75.63_{\pm 0.63}$ & $75.98_{\pm 0.22}$ \\
    SOF & $93.24_{\pm 0.27}$ & $35.61_{\pm 2.49}$ & $77.49_{\pm 2.15}$ & $70.83_{\pm 3.09}$ & $83.52_{\pm 0.01}$ & $84.23_{\pm 1.30}$ \\
    \midrule
    Mean & $84.71_{\pm 1.56}$ & $39.83_{\pm 1.58}$ & $72.77_{\pm 2.77}$ & $65.30_{\pm 1.99}$ & $77.77_{\pm 2.66}$ & $77.49_{\pm 1.36}$ \\
    \bottomrule
    \end{tabular}
\end{table}

\begin{table}
    \caption{Class-wise F1 scores for Instancenorm layer on datasets balanced @ 400.}
    \centering
    \begin{tabular}{lllllll}
\toprule
Dataset & Wake & N1 & N2 & N3 & REM & F1 \\
\midrule
ABC & $86.46_{\pm 1.75}$ & $54.45_{\pm 0.78}$ & $75.68_{\pm 2.16}$ & $70.75_{\pm 0.51}$ & $88.60_{\pm 0.75}$ & $78.36_{\pm 1.20}$ \\
CCSHS & $95.69_{\pm 0.14}$ & $49.23_{\pm 2.62}$ & $85.29_{\pm 0.89}$ & $85.98_{\pm 0.68}$ & $89.23_{\pm 0.91}$ & $88.73_{\pm 0.52}$ \\
CFS & $94.95_{\pm 0.19}$ & $44.97_{\pm 1.34}$ & $83.30_{\pm 0.60}$ & $77.17_{\pm 0.39}$ & $87.45_{\pm 0.38}$ & $87.62_{\pm 0.27}$ \\
CHAT & $77.67_{\pm 6.38}$ & $29.47_{\pm 8.45}$ & $48.78_{\pm 6.21}$ & $71.34_{\pm 2.64}$ & $74.58_{\pm 1.83}$ & $64.19_{\pm 4.63}$ \\
HOMEPAP & $86.27_{\pm 0.53}$ & $43.19_{\pm 1.46}$ & $75.19_{\pm 1.07}$ & $58.39_{\pm 1.03}$ & $83.44_{\pm 0.46}$ & $77.66_{\pm 0.58}$ \\
MASS & $67.25_{\pm 1.95}$ & $43.19_{\pm 2.05}$ & $78.64_{\pm 1.76}$ & $65.78_{\pm 1.37}$ & $78.34_{\pm 1.18}$ & $76.94_{\pm 1.12}$ \\
MROS & $94.80_{\pm 0.18}$ & $41.46_{\pm 0.71}$ & $74.42_{\pm 1.16}$ & $48.89_{\pm 2.31}$ & $82.11_{\pm 0.10}$ & $83.95_{\pm 0.53}$ \\
PhysioNet & $89.43_{\pm 0.41}$ & $44.35_{\pm 0.62}$ & $68.91_{\pm 2.82}$ & $51.05_{\pm 1.32}$ & $77.35_{\pm 0.95}$ & $73.84_{\pm 0.93}$ \\
SHHS & $88.62_{\pm 0.30}$ & $33.02_{\pm 2.26}$ & $74.31_{\pm 2.07}$ & $64.28_{\pm 0.72}$ & $80.32_{\pm 0.42}$ & $79.12_{\pm 0.96}$ \\
SOF & $94.42_{\pm 0.20}$ & $37.18_{\pm 2.37}$ & $78.43_{\pm 1.88}$ & $72.39_{\pm 1.56}$ & $86.82_{\pm 0.70}$ & $85.50_{\pm 0.86}$ \\
    \midrule
Mean & $87.55_{\pm 1.20}$ & $42.05_{\pm 2.27}$ & $74.29_{\pm 2.06}$ & $66.60_{\pm 1.25}$ & $82.83_{\pm 0.77}$ & $79.59_{\pm 1.16}$ \\
\bottomrule
\end{tabular}
\end{table}

\begin{table}
    \caption{Class-wise F1 scores for TMA preprocessing with $\filtersize=5$ on datasets balanced @ 400.}
    \centering
\begin{tabular}{lllllll}
\toprule
Dataset & Wake & N1 & N2 & N3 & REM & F1 \\
\midrule
ABC & $85.50_{\pm 0.93}$ & $54.76_{\pm 0.88}$ & $78.91_{\pm 1.27}$ & $71.40_{\pm 1.23}$ & $88.43_{\pm 0.47}$ & $79.49_{\pm 0.68}$ \\
CCSHS & $95.51_{\pm 0.31}$ & $48.72_{\pm 2.07}$ & $85.02_{\pm 1.09}$ & $85.37_{\pm 1.33}$ & $89.33_{\pm 0.29}$ & $88.47_{\pm 0.62}$ \\
CFS & $94.75_{\pm 0.26}$ & $43.28_{\pm 2.27}$ & $83.06_{\pm 0.79}$ & $77.38_{\pm 0.28}$ & $87.17_{\pm 0.36}$ & $87.37_{\pm 0.44}$ \\
CHAT & $80.70_{\pm 4.88}$ & $37.51_{\pm 1.85}$ & $58.89_{\pm 4.08}$ & $75.71_{\pm 2.09}$ & $75.95_{\pm 2.35}$ & $69.90_{\pm 2.74}$ \\
HOMEPAP & $84.13_{\pm 0.98}$ & $43.92_{\pm 0.51}$ & $74.83_{\pm 1.40}$ & $57.94_{\pm 1.19}$ & $81.58_{\pm 0.13}$ & $76.83_{\pm 0.97}$ \\
MASS & $70.45_{\pm 7.07}$ & $41.83_{\pm 3.63}$ & $77.30_{\pm 1.48}$ & $65.04_{\pm 2.01}$ & $79.50_{\pm 2.42}$ & $76.32_{\pm 0.36}$ \\
MROS & $94.50_{\pm 0.32}$ & $41.74_{\pm 1.13}$ & $75.20_{\pm 1.26}$ & $48.92_{\pm 2.07}$ & $82.35_{\pm 0.70}$ & $84.15_{\pm 0.46}$ \\
PhysioNet & $89.40_{\pm 0.50}$ & $44.65_{\pm 2.19}$ & $71.13_{\pm 5.36}$ & $51.61_{\pm 3.94}$ & $80.06_{\pm 1.02}$ & $75.24_{\pm 2.72}$ \\
SHHS & $88.30_{\pm 0.58}$ & $32.95_{\pm 2.17}$ & $73.23_{\pm 2.58}$ & $62.33_{\pm 0.33}$ & $79.27_{\pm 0.84}$ & $78.19_{\pm 0.90}$ \\
SOF & $93.69_{\pm 0.34}$ & $37.85_{\pm 2.16}$ & $79.33_{\pm 1.49}$ & $72.46_{\pm 1.92}$ & $86.83_{\pm 0.29}$ & $85.56_{\pm 0.90}$ \\
\midrule
Mean & $87.69_{\pm 1.62}$ & $42.72_{\pm 1.89}$ & $75.69_{\pm 2.08}$ & $66.82_{\pm 1.64}$ & $83.05_{\pm 0.89}$ & $80.15_{\pm 1.08}$ \\
\bottomrule
\end{tabular}
\end{table}

\begin{table}
    \caption{Class-wise F1 scores for PSDNorm layer with $\filtersize=5$ on datasets balanced @ 400.}
    \centering
    \begin{tabular}{lllllll}
    \toprule
    Dataset & Wake & N1 & N2 & N3 & REM & F1 \\
    \midrule
    ABC & $84.57_{\pm 1.39}$ & $54.46_{\pm 0.59}$ & $75.94_{\pm 1.14}$ & $70.73_{\pm 1.00}$ & $88.19_{\pm 0.23}$ & $78.08_{\pm 0.78}$ \\
    CCSHS & $95.76_{\pm 0.21}$ & $47.97_{\pm 1.71}$ & $85.34_{\pm 1.76}$ & $86.17_{\pm 2.24}$ & $89.71_{\pm 0.33}$ & $88.79_{\pm 0.99}$ \\
    CFS & $95.01_{\pm 0.17}$ & $42.92_{\pm 1.06}$ & $82.08_{\pm 1.88}$ & $76.98_{\pm 0.96}$ & $87.31_{\pm 0.13}$ & $87.06_{\pm 0.77}$ \\
    CHAT & $82.93_{\pm 1.99}$ & $36.59_{\pm 6.79}$ & $61.84_{\pm 2.98}$ & $77.06_{\pm 1.32}$ & $78.01_{\pm 1.62}$ & $71.86_{\pm 0.95}$ \\
    HOMEPAP & $85.75_{\pm 1.45}$ & $44.47_{\pm 0.78}$ & $75.54_{\pm 1.72}$ & $58.70_{\pm 1.40}$ & $83.24_{\pm 0.55}$ & $77.85_{\pm 1.29}$ \\
    MASS & $72.74_{\pm 2.12}$ & $42.20_{\pm 1.16}$ & $78.56_{\pm 2.98}$ & $66.13_{\pm 2.37}$ & $78.23_{\pm 3.21}$ & $77.16_{\pm 1.73}$ \\
    MROS & $94.63_{\pm 0.31}$ & $41.40_{\pm 1.64}$ & $73.33_{\pm 1.80}$ & $47.56_{\pm 2.25}$ & $82.52_{\pm 0.47}$ & $83.51_{\pm 0.84}$ \\
    PhysioNet & $89.48_{\pm 0.44}$ & $44.33_{\pm 1.43}$ & $67.67_{\pm 6.16}$ & $49.37_{\pm 4.91}$ & $79.23_{\pm 1.21}$ & $73.51_{\pm 3.05}$ \\
    SHHS & $89.09_{\pm 0.66}$ & $33.78_{\pm 2.53}$ & $74.15_{\pm 2.70}$ & $64.40_{\pm 0.29}$ & $80.24_{\pm 1.03}$ & $79.26_{\pm 1.35}$ \\
    SOF & $93.74_{\pm 0.27}$ & $34.70_{\pm 2.45}$ & $76.58_{\pm 2.97}$ & $70.20_{\pm 2.02}$ & $85.50_{\pm 1.63}$ & $84.14_{\pm 1.05}$ \\
    \midrule
    Mean & $88.37_{\pm 0.90}$ & $42.28_{\pm 2.01}$ & $75.10_{\pm 2.61}$ & $66.73_{\pm 1.88}$ & $83.22_{\pm 1.04}$ & $80.12_{\pm 1.28}$ \\
    \bottomrule
    \end{tabular}
\end{table}

\begin{figure}
    \centering
    \includegraphics[width=0.7\linewidth]{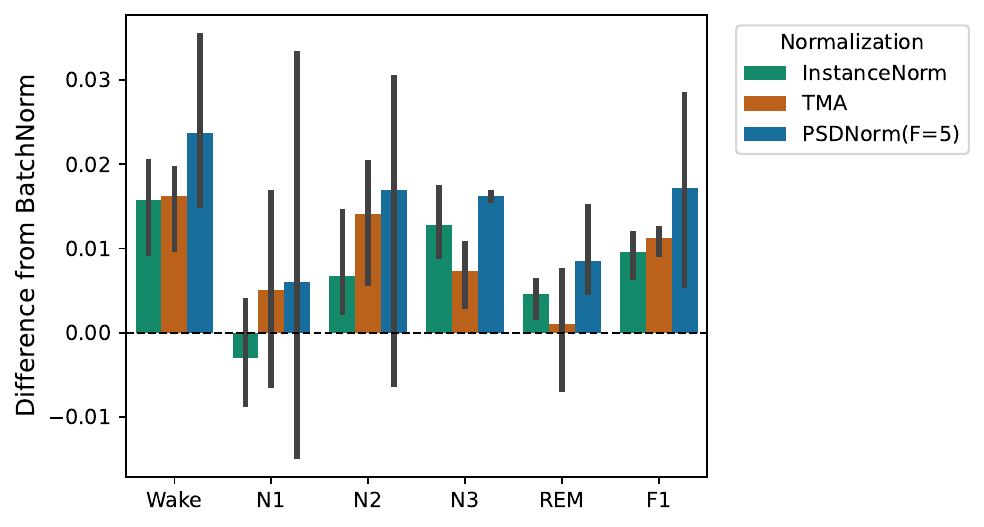}
    \captionof{figure}{Class-wise F1 score differences between normalization layers. The variance is giving by the seeds.}
    \label{fig:classwise_f1_diff}
\end{figure}

The Tables show that the most complicated sleep stage to classify is N1, with F1 scores consistently lower than other stages across all normalization methods.
This is likely due to the inherent difficulty of distinguishing N1 from other stages, as it shares characteristics with both wakefulness and deeper sleep stages.
In contrast, stages like Wake and REM tend to have higher F1 scores, indicating that they are easier to classify accurately.

Figure~\ref{fig:classwise_f1_diff} illustrates the class-wise F1 score differences between normalization layers score against BatchNorm score.
For almost all the classes, other normalization increase the performance compared to BatchNorm except for N1 where InstanceNorm shows a decrease in performance.
PSDNorm is consistently the best performing normalization across all classes, highlighting its effectiveness in improving sleep stage classification.
But we have to note that for N1 and N2 the variance over the seed is big showing the instability of the training for these classes.

\subsection{Study of impact of number of layer in U-Sleep using PSDNorm}
\label{app:usleep_layers}
\begin{figure}
    \centering
    \includegraphics[width=0.7\linewidth]{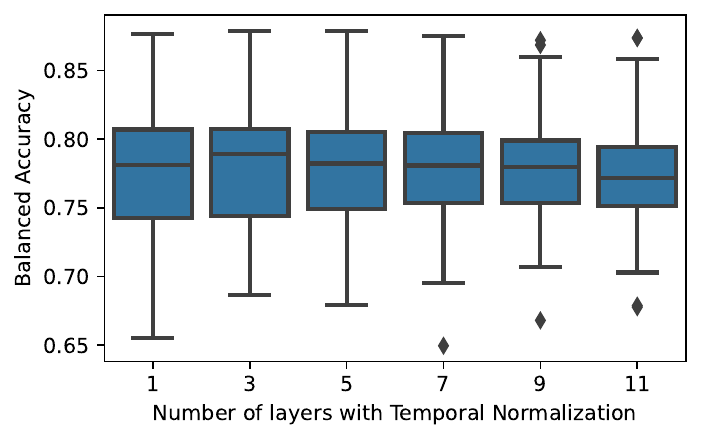}
    \captionof{figure}{Impact of the number of layers in U-Sleep using PSDNorm with $\filtersize=5$. The BACC score is plotted against the number of layers. The variance is one the Datasets.}
    \label{fig:usleep_layers}
\end{figure}

In the main paper, we apply PSDNorm in 3 layers of U-Sleep. Here, we investigate the impact of varying the number of layers that utilize PSDNorm.
Figure~\ref{fig:usleep_layers} shows the BACC score as a function of the number of layers with PSDNorm.
The results indicate that increasing the number of layers with PSDNorm reach a plateau after 3 layers, but does reduce the variance across datasets.
This suggests that while adding more layers with PSDNorm can enhance performance, there are diminishing returns beyond a certain point.
Thus, using PSDNorm in 3 layers strikes a good balance between performance, good variance, and computational efficiency.

\subsection{Study in very low data regime}
\label{app:low_data_regime}
\begin{table}
    \centering
    \caption{Performance of different normalization layers in U-Sleep in very low data regime on datasets balanced @ 40.}
    \begin{tabular}{lccccc}
    \toprule
     Dataset & BatchNorm & LayerNorm & InstanceNorm & PSDNorm(F=5) & PSDNorm(F=15) \\
    \midrule
    ABC & $\mathbf{73.83_{\pm 1.80}}$ & $64.54_{\pm 3.46}$ & $72.29_{\pm 1.50}$ & $71.35_{\pm 1.15}$ & $72.61_{\pm 1.84}$ \\
    CCSHS & $83.58_{\pm 0.45}$ & $77.91_{\pm 1.59}$ & $\mathbf{85.33_{\pm 0.69}}$ & $85.10_{\pm 0.17}$ & $85.00_{\pm 0.25}$ \\
    CFS & $81.13_{\pm 0.85}$ & $76.57_{\pm 1.89}$ & $81.59_{\pm 0.47}$ & $\mathbf{81.79_{\pm 0.82}}$ & $80.93_{\pm 0.13}$ \\
    CHAT & $55.74_{\pm 2.38}$ & $58.42_{\pm 1.64}$ & $63.38_{\pm 5.26}$ & $59.66_{\pm 0.89}$ & $\mathbf{67.86_{\pm 3.59}}$ \\
    HOMEPAP & $74.52_{\pm 1.66}$ & $72.19_{\pm 1.65}$ & $76.03_{\pm 0.48}$ & $76.01_{\pm 0.32}$ & $\mathbf{76.14_{\pm 1.63}}$ \\
    MASS & $\mathbf{70.15_{\pm 3.09}}$ & $64.79_{\pm 2.72}$ & $66.58_{\pm 0.30}$ & $69.49_{\pm 1.12}$ & $68.21_{\pm 6.25}$ \\
    MROS & $77.12_{\pm 0.03}$ & $71.76_{\pm 2.38}$ & $76.59_{\pm 0.28}$ & $\mathbf{77.19_{\pm 0.38}}$ & $76.77_{\pm 1.30}$ \\
    PhysioNet & $71.68_{\pm 1.61}$ & $69.59_{\pm 1.46}$ & $72.68_{\pm 3.09}$ & $\mathbf{73.67_{\pm 0.93}}$ & $72.08_{\pm 3.65}$ \\
    SHHS & $73.74_{\pm 1.11}$ & $71.50_{\pm 0.97}$ & $75.56_{\pm 1.66}$ & $75.43_{\pm 1.38}$ & $\mathbf{76.00_{\pm 0.63}}$ \\
    SOF & $75.84_{\pm 2.16}$ & $73.50_{\pm 1.97}$ & $\mathbf{76.54_{\pm 1.59}}$ & $75.14_{\pm 1.79}$ & $76.00_{\pm 3.00}$ \\
    \midrule
    Mean & $73.35_{\pm 0.87}$ & $70.72_{\pm 1.22}$ & $75.19_{\pm 1.03}$ & $74.79_{\pm 0.75}$ & $\mathbf{75.88_{\pm 0.93}}$ \\
    \bottomrule
    \end{tabular}
\end{table}

In this section, we explore the performance of different normalization layers in U-Sleep when trained on a very limited dataset, specifically balanced @ 40 subjects.
The results indicate that in this low data regime, PSDNorm with a small filter size ($F=5$) struggle to outperform InstanceNorm while still outperforming BatchNorm and LayerNorm.
However, PSDNorm with a larger filter size ($F=15$) gives the best average performance across datasets with an increase of more than 10\% in BACC compared to other normalization layers for CHAT dataset.
This suggests that in scenarios with very limited data, stronger normalization (larger filter size) is beneficial to prevent overfitting and enhance generalization.

\subsection{Comparison with AdaBN}
\begin{table}
    \centering
\caption{\textbf{Comparison of \method with AdaBN on datasets balanced @400 using U-Sleep.} }
\resizebox{\textwidth}{!}{
\begin{tabular}{lcccccccc}
\toprule
Dataset & BatchNorm & LayerNorm & InstanceNorm & AdaBN(3) & AdaBN(12) & AdaBN(full) & TMA & PSDNorm \\
\midrule
ABC & $78.26_{\pm 1.33}$ & $75.29_{\pm 0.81}$ & $\mathbf{78.73_{\pm 0.42}}$ & $78.25_{\pm 1.30}$ & $77.21_{\pm 1.56}$ & $76.89_{\pm 1.30}$ & $78.04_{\pm 0.51}$ & $78.18_{\pm 0.68}$ \\
CCSHS & $87.42_{\pm 0.16}$ & $85.20_{\pm 0.48}$ & $\mathbf{87.62_{\pm 0.42}}$ & $87.38_{\pm 0.17}$ & NaN & $86.99_{\pm nan}$ & $87.57_{\pm 0.20}$ & $87.58_{\pm 0.30}$ \\
CFS & $84.32_{\pm 0.57}$ & $81.66_{\pm 1.36}$ & $\mathbf{84.72_{\pm 0.33}}$ & $84.21_{\pm 0.60}$ & NaN & $83.61_{\pm nan}$ & $84.58_{\pm 0.20}$ & $84.29_{\pm 0.36}$ \\
CHAT & $66.55_{\pm 0.88}$ & $61.19_{\pm 1.16}$ & $64.43_{\pm 4.41}$ & $66.49_{\pm 0.89}$ & NaN & NaN & $68.73_{\pm 2.48}$ & $\mathbf{70.28_{\pm 1.70}}$ \\
HOMEPAP & $75.25_{\pm 0.50}$ & $74.86_{\pm 0.25}$ & $76.47_{\pm 0.63}$ & $75.15_{\pm 0.46}$ & $74.39_{\pm 0.56}$ & $74.46_{\pm 0.53}$ & $76.10_{\pm 0.32}$ & $\mathbf{76.83_{\pm 0.61}}$ \\
MASS & $70.00_{\pm 1.91}$ & $68.56_{\pm 3.33}$ & $71.52_{\pm 1.13}$ & $69.68_{\pm 1.66}$ & $68.46_{\pm 2.58}$ & $68.31_{\pm 1.86}$ & $71.63_{\pm 1.92}$ & $\mathbf{72.77_{\pm 1.09}}$ \\
MROS & $\mathbf{80.37_{\pm 0.20}}$ & $78.05_{\pm 0.22}$ & $80.28_{\pm 0.21}$ & $80.34_{\pm 0.20}$ & NaN & NaN & $80.09_{\pm 0.40}$ & $80.26_{\pm 0.11}$ \\
PhysioNet & $\mathbf{75.81_{\pm 0.13}}$ & $71.82_{\pm 2.12}$ & $74.68_{\pm 0.55}$ & $75.27_{\pm 0.14}$ & $74.01_{\pm 0.13}$ & $74.01_{\pm 0.14}$ & $75.31_{\pm 1.54}$ & $74.82_{\pm 2.11}$ \\
SHHS & $76.44_{\pm 0.92}$ & $75.12_{\pm 0.39}$ & $78.68_{\pm 0.37}$ & $76.43_{\pm 0.92}$ & NaN & NaN & $77.00_{\pm 0.39}$ & $\mathbf{78.88_{\pm 0.68}}$ \\
SOF & $81.08_{\pm 1.14}$ & $78.70_{\pm 0.50}$ & $80.68_{\pm 1.38}$ & $81.05_{\pm 1.13}$ & NaN & $81.17_{\pm nan}$ & $\mathbf{81.25_{\pm 0.71}}$ & $79.49_{\pm 0.41}$ \\
\midrule
Mean & $77.22_{\pm 0.34}$ & $75.04_{\pm 0.42}$ & $78.17_{\pm 0.28}$ & $77.18_{\pm 0.34}$ & $74.29_{\pm 1.08}$ & $76.59_{\pm 4.82}$ & $77.74_{\pm 0.36}$ & $\mathbf{78.85_{\pm 0.59}}$ \\

\bottomrule
\end{tabular}
}
\label{tab:adabn}
\end{table}

Table \ref{tab:adabn} presents a comparison between \method and AdaBN using the U-Sleep architecture on datasets balanced @400.
AdaBN adapts the BatchNorm statistics separately for each subject.
 In the original paper, all BN layers are replaced (AdaBN(full)), 
 but for a fair comparison we also evaluate two additional settings: AdaBN(3), which adapts only the first three BN layers, 
 and AdaBN(12), which adapts only the first BN layers of the encoders.

As expected, AdaBN struggles to achieve strong performance on sleep staging. 
It consistently underperforms compared to TMA and, in some cases, even performs worse than standard BatchNorm. 
Notably, increasing the number of adapted BN layers further degrades performance, highlighting the importance of not adapting too many layers within the model.
In contrast, \method consistently outperforms AdaBN across all datasets, demonstrating its effectiveness in normalizing features for sleep staging tasks.


\end{document}